\newbox\statebox
\newcommand{\myState}[1]{%
    \setbox\statebox=\vbox{#1}%
    \edef\thealgruleheight{\dimexpr \the\ht\statebox+1pt\relax}%
    \edef\thealgruledepth{\dimexpr \the\dp\statebox+1pt\relax}%
    \ifdim\thealgruleheight<.75\baselineskip
        \def\thealgruleheight{\dimexpr .75\baselineskip+1pt\relax}%
    \fi
    \ifdim\thealgruledepth<.25\baselineskip
        \def\thealgruledepth{\dimexpr .25\baselineskip+1pt\relax}%
    \fi
    \State #1%
    \def\thealgruleheight{\dimexpr .75\baselineskip+1pt\relax}%
    \def\thealgruledepth{\dimexpr .25\baselineskip+1pt\relax}%
}
\newtheorem{property}{Property}
\newtheorem{assumption}{Assumption}
\newtheorem{definition}{Definition}
\newtheorem{lemma}{Lemma}
\newtheorem*{remark}{Remark}
\def\eqref#1{equation~\ref{#1}}
\def\1{\bm{1}}
\def\vg{{\bm{g}}}
\def\vp{{\bm{p}}}
\def\vv{{\bm{v}}}
\def\vw{{\bm{w}}}
\def\vx{{\bm{x}}}
\DeclareMathAlphabet{\mathsfit}{\encodingdefault}{\sfdefault}{m}{sl}
\SetMathAlphabet{\mathsfit}{bold}{\encodingdefault}{\sfdefault}{bx}{n}
\DeclareMathOperator*{\argmin}{arg\,min}
\crefname{section}{Sec.}{Secs.}
\Crefname{section}{Section}{Sections}
\Crefname{table}{Table}{Tables}
\crefname{table}{Tab.}{Tabs.}
\newcommand{\Appendix}{\textbf{Appendix}\xspace}
\begin{document}

%%%%%%%%% TITLE - PLEASE UPDATE
\title{Robust Federated Learning against both Data Heterogeneity and Poisoning Attack via Aggregation Optimization}

% \author{Yueqi Xie\\
% HKUST\\
% Institution1 address\\
% {\tt\small firstauthor@i1.org}
% % For a paper whose authors are all at the same institution,
% % omit the following lines up until the closing ``}''.
% % Additional authors and addresses can be added with ``\and'',
% % just like the second author.
% % To save space, use either the email address or home page, not both
% \and
% Second Author\\
% Institution2\\
% First line of institution2 address\\
% {\tt\small secondauthor@i2.org}
% }

\author{Yueqi Xie\footnotemark[1]\ , Weizhong Zhang\footnotemark[1]\ , Renjie Pi\footnotemark[1]\ , Fangzhao Wu,\\ Qifeng Chen, Xing Xie,\ Sunghun Kim\\
The Hong Kong University of Science and Technology\\
Microsoft Research Asia \\
% {\tt\small \{yxieay,\}@connect.ust.hk, cqf@ust.hk,}
}

\author{
Yueqi Xie$^1$\footnotemark[1]\
\quad Weizhong Zhang$^1$\footnotemark[1]\
\quad Renjie Pi$^1$\footnotemark[1]\
\quad Fangzhao Wu$^2$
\quad Qifeng Chen$^1$\\
\quad Xing Xie$^2$
\quad Sunghun Kim$^1$ \\
$^1$The Hong Kong University of Science and Technology \\
$^2$Microsoft Research Asia\\
}

\maketitle

\renewcommand{\thefootnote}{\fnsymbol{footnote}}
\footnotetext[1]{Joint first authors}

%%%%%%%%% ABSTRACT
\begin{abstract}
Non-IID data distribution across clients and poisoning attacks are two main challenges in real-world federated learning (FL) systems. While both of them have attracted great research interest with specific strategies developed, no known solution manages to address them in a unified framework. To universally overcome both challenges, we propose SmartFL, a generic approach that optimizes the server-side aggregation process with a small amount of proxy data collected by the service provider itself via a subspace training technique. Specifically, the aggregation weight of each participating client at each round is optimized using the server-collected proxy data, which is essentially the optimization of the global model in the convex hull spanned by client models. Since at each round, the number of tunable parameters optimized on the server side equals the number of participating clients (thus independent of the model size), we are able to train a global model with massive parameters using only a small amount of proxy data (e.g., around one hundred samples). With optimized aggregation, SmartFL ensures robustness against both heterogeneous and malicious clients, which is desirable in real-world FL where either or both problems may occur. We provide theoretical analyses of the convergence and generalization capacity for SmartFL. Empirically, SmartFL achieves state-of-the-art performance on both FL with non-IID data distribution and FL with malicious clients. The source code will be released.
\end{abstract}

%%%%%%%%% BODY TEXT
\section{Introduction}
\label{sec:intro}
Data security and privacy have raised increasing interest in machine learning and computer vision research, especially in privacy-sensitive areas such as health care~\cite{rieke2020future,kairouz2021advances}.
Federated Learning (FL) emerges as an effective privacy-preserving machine learning approach to jointly optimize a global model over decentralized data~\cite{konevcny2016federated,yang2019federated}. Typically, generic FL involves multiple rounds of clients' local training followed by server-side aggregation. The \textbf{server-side aggregation} plays an essential role that aggregates the client models into a global model, which is then used to initialize the clients in the next training round.
The standard aggregation strategy Federated Averaging (FedAVG)~\cite{mcmahan2017communication}, which takes the sample number weighted average over clients' weights, is shown to converge to an ideal model as centralized training and works well in IID data distribution without poisoning attacks~\cite{zinkevich2010parallelized,mcmahan2017communication,zhou2017convergence}.

% due to large weight deviations and poisoned weights, respectively~\cite{konevcny2016federated,yang2019federated,yin2018byzantine}. 
However, in real-world FL, data heterogeneity across clients and the potential presence of malicious clients severely compromise the effectiveness of standard aggregation~\cite{konevcny2016federated,yang2019federated,yin2018byzantine}. 
Various specifically-designed strategies have been proposed to tackle these two problems separately.
To tackle data heterogeneity, prior studies propose regularized local training~\cite{li2020federated,karimireddy2019scaffold}, personalized FL~\cite{t2020personalized,zhang2021parameterized}, handcrafted aggregation rules to reweight the updates based on the statistics of updates or performance on proxy data~\cite{wang2020tackling,yeganeh2020inverse, xiao2021novel}, and further tuning the global model with proxy data~\cite{lin2020ensembleFedDF,chen2020fedbe} in every communication round. 
% to alleviate the influence of large deviation of local updates.
To improve robustness against attacks, Byzantine-robust aggregations~\cite{yin2018byzantine,blanchard2017machine,guerraoui2018hidden} are proposed to exclude statistical outliers of updates. In many methods, proxy data
is utilized to provide additional clues for enhanced performance~\cite{sageflow, fltrust}. However, with specifically-crafted strategies for one of the challenges, existing solutions cannot handle both challenges in one generic FL framework, which limits their effectiveness in real-world FL where either or both problems may occur.
% since the potential problem is typically unknown.
% with unknown potential conditions.

% \yq{Inspired by the success of leveraging server proxy data in federated learining~\cite{xiao2021novel,cheng2021fedgems,lin2020ensembleFedDF,chen2020fedbe,fltrust,sageflow,Fedaux,cho2022heterogeneous,nagalapatti2021game},

% }x
% has some reasonable knowledge of the task (i.e., a small amount of server proxy data), 
\textit{Can we optimize the aggregated global model to jointly handle any potential challenge from heterogeneous and malicious clients, instead of heuristically developing specific methods for one specific problem?} Based on the work presented in this paper, our answer to this question is \textit{Yes}.
Following an emerging line of FL research~\cite{xiao2021novel,fltrust,sageflow,nagalapatti2021game}, we consider the practical scenario that the service provider itself can collect a small amount of clean proxy data for the current learning task. 
% Note that this requirement is easy to fulfill since collecting a small dataset usually does not cost much effort.
% and, more importantly, improves the performance on the real test data}. 
% In essence, the goal is to \textit{optimize the global model aggregated from the client updates using the server proxy data and, more importantly, improves the performance on the real test data}. 
% on standard aggregation 
% Achieving this goal indicates 
% resolves negative effects brought by both non-IID distribution of data and poisoning attacks.
% , which is robust against both non-IID distribution of data and poisoning attacks.
% smartly aggregate an optimal global model
With the server-collected data, a straightforward data-driven optimization strategy would be further finetuning the global model aggregated with FedAVG in every communication round. We term those approaches \textit{full-space training} since they optimize the global model in the entire parameter space. However, to tune the global model with massive parameters, a large amount of proxy data is required as the carrier of knowledge. Otherwise, severe overfitting may occur, which is verified in the experimental section (see Section \ref{s_exp_indepth}). Unfortunately, 
it is impractical for the service provider to collect lots of on-server proxy data. 
What's more, the full-space training approaches are unlikely to mitigate the negative effects of malicious clients with limited proxy data since including such clients in aggregation often leads to a drastic performance drop (see Section \ref{exp-attack}).
Also, full-space training leads to low aggregation efficiency and long latency because of the large dataset used and huge amounts of parameters to optimize. Finally, it remains unclear whether the full-space training-based FL systems can be theoretically guaranteed to converge to the optimum.

% To universally overcome both challenges with the available knowledge
In light of the above-mentioned issues, we propose SmartFL with a generic and elegant
aggregation strategy that optimizes the aggregated global model
via \textit{subspace training}
% .Instead of tuning the global model at each round in full parameter space (\textit{full-space training}), which requires an impractically large proxy dataset for massive parameters, we constrain the optimization space 
within the convex hull spanned by the client models' parameters.
% instead of the entire model parameter space.
% with a novel server-side aggregation technique,
% which constrains the optimization of the global model within . 
To be precise, each time after local training, SmartFL updates the global model to be the optimal convex combination of the received client models' parameters by fitting the on-server proxy data.
This extracted subspace is mainly inspired by two facts. On the one hand, 
prior studies on mode connectivity~\cite{garipov2018loss,draxler2018essentially,kuditipudi2019explaining} show that low-cost solutions found by two networks can be connected by simple (e.g., piece-wise linear) paths with constant error or loss. %independently trained models can be connected in a weight space with low loss. 
% Some mathematical explanations~\cite{kuditipudi2019explaining} for this phenomenon have been provided recently. 
On the other hand, this subspace naturally contains the reweighting-based aggregation methods for heterogeneous FL and attack-robust FL~\cite{yeganeh2020inverse,xiao2021novel, wang2020tackling,sageflow}. These facts suggest that the extracted subspace 
has the potential to contain the desirable global model.
By optimizing the global model within the subspace, the degree of freedom for training is significantly reduced compared with full-space training.
% Through constructing the subspace, we can significantly reduce the degree of freedom for training.
% to optimize the combination coefficients for client models instead of massive model parameters. 
This makes SmartFL enjoy a much lower demand for on-server proxy data, better generalization, higher efficiency,
% What's more, the negative effects of malicious clients are alleviated since the weights for malicious clients are optimized to low values (see Section~\ref{exp-attack}).
and effectiveness to alleviate the effect of attackers with their weights optimized to very low values.
With optimized aggregation in every round, SmartFL ensures robustness against potential challenges from both heterogeneous and malicious clients.
% boosts performance under either or both challenging conditions.
% What's more, our approach also successfully tackles various attacks, since 
% We show that the malicious clients can be nearly excluded from the aggregation with the corresponding weights optimized to very low values (see Section~\ref{exp-attack}).
% What's more, the negative effects of malicious clients are alleviated since the weights for malicious clients are optimized to low values (see Section~\ref{exp-attack}).
We also establish theoretical guarantees on the convergence and generalization of SmartFL.

It is worth mentioning that our setup is practical, which assumes the service provider itself collects a small clean labelled proxy dataset (around a hundred samples, 0.2\% of the dataset by default). 
% In fact, server-collected labelled proxy data is widely used in the advanced aggregation methods for heterogeneous FL~\cite{xiao2021novel,cheng2021fedgems} and attack-robust FL~\cite{fltrust,sageflow}. 
The required amount of proxy data for SmartFL is among the smallest ones in the existing work leveraging  server-collected labelled proxy data~\cite{xiao2021novel,cheng2021fedgems,fltrust,sageflow}
Also, as shown in Section~\ref{s_exp_indepth}, SmartFL can boost the performance even when the server-collected data is \textbf{highly different from global distribution}, which further verifies the feasibility of SmartFL.
What's more, we also extend to the usage of a small amount of \textbf{unlabelled data} (SmartFL-U) for heterogeneous FL to empower usage for extreme conditions.
% and compare comprehensively with existing aggregation strategies for heterogeneous FL using unlabelled proxy data~\cite{lin2020ensembleFedDF,chen2020fedbe}.
% , with the same availability setting but a much fewer amount of data with the state-of-the-art server-side full-space optimization strategies for heterogeneous FL~\cite{lin2020ensembleFedDF,chen2020fedbe}. 
Specifically, we optimize the combination coefficients for labelled data with ground truth labels and unlabelled data with pseudo-labels generated by the ensemble of clients.
% collaborative client knowledge. 

We conduct extensive experiments on CIFAR-10/100, FMNIST, MNIST, and 20Newsgroups. The results demonstrate that SmartFL can boost the performance of FL with non-IID data distribution and poisoning attacks with very few proxy data samples. For instance, with only 128 samples (0.2\% of the dataset) of server proxy data for CIFAR-10, we can attain a significant performance improvement compared with state-of-the-art methods for heterogeneous FL~\cite{lin2020ensembleFedDF,chen2020fedbe,xiao2021novel,li2020federated, karimireddy2019scaffold}. 
Also, when malicious clients exist, our solution manages to learn small coefficients for malicious clients to defend against the attacks even in the condition of a large portion of attacks and highly-non-IID data distribution, yielding state-of-the-art performance compared with existing attack-robust methods with proxy data~\cite{fltrust,sageflow} and statistical methods~\cite{yin2018byzantine,blanchard2017machine}.
Our contributions can be summarized as follows:
\begin{itemize}
    \item As far as we know, SmartFL is the first FL framework that universally handles two major challenges in FL systems (i.e., non-IID distribution of data and poisoning attacks) in a unified framework.
    \item We propose SmartFL, which effectively optimizes server-side aggregation with a small amount of server-collected proxy data via subspace training.
    \item We provide theoretical analysis for convergence and generalization capacity for SmartFL. Extensive experiments on multiple datasets with non-IID data distribution and poisoning attacks demonstrate the superiority of our method.
\end{itemize}

\section{Related Work}
% In this section, we first review the related work regarding Federated Learning with non-IID Data Distribution and  Attack-resistant Aggregation. Then, we discuss the comparison of SmartFL with close FL works. 

% \vspace{-5pt}
\label{related_work}
\subsection{Federated Learning with Non-IID Data}
% \vspace{-5pt}
% \subsection{FL with Heterogeneous Data}
Increasing research efforts are devoted to improving the FL performance with heterogeneous data distribution.  
They can be classified into modifying local training and modifying server-side aggregation.
In this section, we focus on the latter one, which is more closely related to our work. More related works on improving local training~\cite{li2020federated,Acar2021Dyn,li2021model,karimireddy2019scaffold,shin2020xor,oh2020mix2fld,YoonSHY21,zhao2018federated,kulkarni2020survey,t2020personalized,hanzely2020lower,li2021ditto,chen2021bridging} are discussed in \Appendix Sec. 1.
% \autoref{suppl-sec:related}. 

Several prior studies propose to reweight the model updates with some statistical property and hand-crafted rules. 
FedNova~\cite{wang2020tackling} and IDA~\cite{yeganeh2020inverse} propose to normalize the aggregation weights according to the local training steps and distance between local and global updates, respectively.
% IDA~\cite{yeganeh2020inverse} sets the weights according to the inverse distance of the client models to the global model. 
FedAvgM~\cite{hsu2019measuring} goes beyond the weighted average and 
adopts server-side momentum to improve the aggregation.
Recently, solutions leveraging server unlabelled/labelled data to further tune the aggregated global model in every communication round have drawn much research attention with promising performance.
Specifically, 
FedDF~\cite{lin2020ensembleFedDF} and   FedBE~\cite{chen2020fedbe} leverage ensemble knowledge with average/bayesian ensembled logits of clients' predictions on the server unlabelled data to finetune the global model and
% further proposes generating pseudo labels with the Bayesian ensemble technique and
validate that the ground truth labels 
% (referred as Finetuning in our paper) 
lead to the best finetuning performance if available. FedET~\cite{cho2022heterogeneous} and FedAUX~\cite{Fedaux} include more carefully logit ensembling strategies.
However, these solutions demand a large amount of proxy data to tune the global model with massive parameters, which is not always realistic for FL systems, even for unlabelled data. 
ABAVG~\cite{xiao2021novel} uses accuracy on labelled proxy data to determine the aggregation weight of clients to enable quality-aware aggregation. However, it heuristically assumes the coefficients should be proportional to the proxy data accuracy, which does not fully utilize the ground truth knowledge and does not get a pleasant gain.
% \vspace{-5pt}
\subsection{Federated Learning with Poisoning Attack}
% \vspace{-5pt}
% It is well known that 
FL is vulnerable to poisoning attacks due to a vast number of uncontrolled clients, some of which may be malicious~\cite{poisoning_attack}.  
% Several attack-resistant aggregation methods are proposed to tackle the problem. 
\cite{blanchard2017machine} first proposes a vector-wise filtering technique named Krum and raises attention to attack-robust aggregations. Afterward, dimension-wise filtering techniques are introduced, such as Median~\cite{yin2018byzantine}, Trimmed Mean~\cite{yin2018byzantine}, and signSGD based on majority voting~\cite{DBLP:conf/iclr/BernsteinZAA19}. Also, advanced vector-wise filtering methods include Multi-Krum~\cite{blanchard2017machine}, Bulyan~\cite{guerraoui2018hidden}, RFA~\cite{pillutla2019robust}, RSA~\cite{li2019rsa}, DnC~\cite{shejwalkar2021manipulating}, residual-based reweighting \cite{Fu2021}, attack-adaptive aggregation \cite{Wan2021}, and bucketing-based aggregation~\cite{DBLP:conf/iclr/KarimireddyHJ22}.
Most of these solutions can guarantee the success of defense under certain assumptions, such as IID distribution of data or the constrained portion of malicious clients. However, such assumptions do not always hold in real scenarios. 
% The recent work~\cite{DBLP:conf/iclr/KarimireddyHJ22} provides CClip, which combines bucketing with existing solutions to provide a convergence-guaranteed defense under non-IID scenarios, yet still cannot deal with a large portion of attackers.  
Leveraging proxy data provides the possibility to further use server knowledge to help defend against attacks beyond idealized assumptions.
FLTrust~\cite{fltrust} maintains a server model and utilizes the statistical properties of the client model and server model to reweight the client updates. Sageflow~\cite{sageflow} combines entropy-based filtering and loss-based reweighting with the proxy data.
Both methods leverage proxy data to perform some statistical analysis to heuristically reweight the client updates,
while our method directly uses server proxy data to optimize the aggregation and leads to stabler defense performance, faster convergence, and functionality beyond solely tackling attacks such as improving FL with heterogeneous data distribution without attacks.

The related work on \textbf{Training in Subspace}~\cite{DBLP:conf/iclr/LiFLY18,gur2018gradient,vinyals2012krylov,DBLP:conf/iclr/LiFLY18,gressmann2020improving,li2022low,li2022subspace}, and \textbf{Comparison with Reweighting-based FL Works}~\cite{wu2022node,yeganeh2020inverse,xiao2021novel,nagalapatti2021game,zhang2021parameterized,fltrust,sageflow} are in \Appendix.

% \vspace{-10pt}
\section{Background}
% \textbf{Notations.} For any positive interger $\tau$, we define $[\tau]=\{1, \ldots, \tau\}$.
% \vspace{-8pt}
\textbf{Generic FL.} 
Suppose we have $M$ clients with local private dataset ${\mathcal{D}}_m = \{(\vx_i, y_i)\}_{i=1}^{|\mathcal{D}_m|}$ drawn from the heterogeneous local distributions, and $\mathcal{D} = \cup_{m=1}^M \mathcal{D}_m$ denotes all data from all clients, which can be viewed as sampled from the global distribution.
Then the generic federated learning optimization problem can be formulated as
\iffalse
\begin{align}
    \min_{\vw}{\mathcal{L}(\vw,\mathcal{D}) = \sum_{m=1}^{M}\frac{|{D}_m|}{|{D}|}{\mathcal{L}_m{(\vw,\mathcal{D}_m)}}},\label{eqn:obj}
\end{align}
\fi
% {\setlength\abovedisplayskip{0pt}
% \setlength\belowdisplayskip{3pt}
\begin{align}
    \min_{\vw}{\mathcal{L}(\vw,\mathcal{D}) ={\sum_{m=1}^{M} \alpha_{m}{\mathcal{L}_m{(\vw,\mathcal{D}_m)}}}},\label{eqn:obj}
\end{align}
where $\vw \in \mathbb{R}^d$ is the model parameter, $\alpha_m = \frac{|{\mathcal{D}}_m|}{|\mathcal{D}|}$, and $ \mathcal{L}_m{(\vw,\mathcal{D}_m)} = \frac{1}{|\mathcal{D}_m|}\sum_{\xi \in \mathcal{D}_m} \ell(\vw, \xi)$ is the empirical risk for client $m$ with $\ell(\cdot, \cdot)$ being the loss function. We denote the optimal solution of (\ref{eqn:obj}) as $\vw^*$. 
% }

\textbf{FedAVG.} Since the data is retained by clients, the optimization problem cannot be directly solved. To approximately approach the problem, a standard solution is FedAVG~\cite{mcmahan2017communication}, which aggregates the locally trained models to a global shared model on the server. The global model $\vw^{t+1}$ is aggregated at the end of $t$-th communication round as
% which typically involves local training and global aggregation for multiple  rounds:
    % \begin{equation}
    % \textbf{Local Training}:\vw_m = argmin_{\vw}\mathcal{L}_m{(\vw)}; \hspace{20pt}
    % \textbf{Global Aggregation}: \bar{\vw} \leftarrow \sum_{m=1}^{M}\frac{|\mathcal{D}_m|}{|\mathcal{D}|}{\vw_m}, \label{eq_FedAVG}
    % \end{equation}
    % where in every round, local weights $\vw_m$ are initialized with $\bar{\vw}$ in the last round, and local training is performed in parallel.
% {\setlength\abovedisplayskip{0pt}
% \setlength\belowdisplayskip{3pt}
\begin{align}
% &\textbf{Local Training}:\vw_m^{(t,k)} =\begin{cases}
% \vw_m^{(t,k-1)} - \eta  \mathbf{g}_m(\vw_m^{(t,k-1)}), &\mbox{ if } k \in [\tau]\\
% \vw^{t}, &\mbox{ if } k=0
% \end{cases} \mbox{ for all } m \in [M],\\ 
% &\textbf{Global Aggregation}: 
\vw^{t+1}=\frac{1}{C^t}\sum_{m\in \mathcal{M}^t} \alpha_m{\vw_m^{t}} = \vw^{t} + \frac{1}{C^t}\sum_{m=1}^M \alpha_m \Delta_m^{t},\label{eq_FedAVG}
\end{align}
where $\mathcal{M}^t \subset [M]=\{1,2,\ldots, M\}$ is the set of clients sampled in the $t$-th round, $C^t=\sum_{m\in \mathcal{M}^t} \alpha_m$, $\vw_m^{t}$ denotes the client $m$'s local model trained with the local dataset $\mathcal{D}_m$ at the end of $t$-th communication round, and $\Delta_m^{t} = \vw_m^{t} - \vw^{t}$ denotes the cumulative local updates of client $m$ in round $t$.
% }
% \vspace{-6pt}

\section{SmartFL}
\label{sec:method}
% \vspace{-5pt}
\subsection{Method}
% \vspace{-3pt}
In this section, we introduce SmartFL, a generic and powerful server-side aggregation strategy to smartly aggregate an optimized global model from clients' updated models using a small amount of proxy data.
Through optimizing the aggregation process in every communication round, SmartFL jointly tackles various potential challenging conditions and enables a stable and robust aggregation. 

We first introduce the \textbf{formulation of server-side optimization problem} and the straightforward full-space training scheme.
Then, we demonstrate the key component of SmartFL, i.e., \textbf{the subspace training technique}, to overcome the drawbacks of data-driven optimization on the entire model parameters. Afterward, we show the strategy for \textbf{the extension to unlabelled proxy data}. Finally, we provide the \textbf{implementation and overall process}.
% Our target is to effectively optimize the aggregated global model given the client updates on a few on-server proxy data, and successfully generalize to real test data. 

% Through optimizing the proposed aggregation process in every communication round, SmartFL can successfully ensure robustness against both non-IID data distribution and poisoning attacks.

\textbf{Server-side Optimization.} We aim to leverage server proxy data to optimize the global model based on the clients' local models. Note that the server-side optimization is performed on the global model $\vw^{t+1}$, for $t =0, 1, 2 \dots, T$, in the server-side aggregation process at the end of every communication round.
For simplicity, we denote the global model as $\vw$ and demonstrate the on-server optimization for the aggregation at the end of $t$-th communication round as follows.
\begin{remark}
    In our setup, we assume the server itself can collect a small amount of clean training data $\mathcal{D}_s$ (e.g., 100 samples in total) for the learning task, which is a practical assumption widely used in the prior studies~\cite{fltrust,sageflow, xiao2021novel,cheng2021fedgems,nagalapatti2021game}. 
\end{remark}

% Assuming the server holds a small amount of unbiased labelled proxy data $\mathcal{D}_s$ sampled from the global distribution $\mathcal{D}$, 
Then, we can optimize the global model $\vw$ with the empirical risk on the proxy data, denoted as $\mathcal{L}_s{(\vw, \mathcal{D}_s)}=\frac{1}{\left|\mathcal{D}_s\right|} \sum_{\xi \in \mathcal{D}_s} \ell(\boldsymbol{w}, \xi)$.

% FedAVG suffers from performance degradation when.
A straightforward data-driven optimization strategy is further finetuning the global model initialized with the coefficients of FedAVG, which is validated to be effective in dealing with non-IID data distribution in the prior study~\cite{chen2020fedbe} if plenty of proxy data is available. The optimization process is as follows:
% to deal with non-IID data distribution. 
% $\mathcal{L}_s{(\vw, \mathcal{D}_s)}$, can be minimized:
% They minimize empirical risk with proxy data $\mathcal{D}_s$ sampled from the global distribution $\mathcal{D}$, denoted as $\mathcal{L}_s{(\vw, \mathcal{D}_s)}$ on the global model initialized with FedAVG during aggregation in every communication round:
% {\setlength\abovedisplayskip{0pt}
% \setlength\belowdisplayskip{3pt}
\begin{align}
    &\textbf{Initialization}:\vw \leftarrow \frac{1}{C^t}\sum_{m\in \mathcal{M}^t}\alpha_m{\vw_m^t}, \\
&\textbf{Full-space Trainig}: \min_{\vw}\mathcal{L}_s{(\vw, \mathcal{D}_s)}. 
\label{eq_standard}
\end{align}
% Although full-space training demonstrates robustness against non-IID data distribution given plenty of proxy data on the server, 
However, this strategy suffers from severe overfitting in the practical scenario, where the on-server proxy dataset is not likely to be impractically large. Also, this method can not effectively eliminate the effects of poisoning attacks.
% }

\textbf{Subspace Training for Server-side Optimization.}
Inspired by prior studies on mode connectivity and the success of reweighting-based methods for heterogeneous/attack-robust FL, as we discussed in the introduction section, we constrain the optimization process in the promising subspace, i.e., the convex hull spanned by the clients' models. 
Instead of training the global model in the entire parameter space, we optimize the model in the reduced subspace with a significantly lower dimension.
% We form the server-side aggregation as an optimization problem with constraints in the convex hull spanned by the clients' models:
The subspace optimization problem at the end of communication round $t$ can be formulated as 
\iffalse
\begin{align}
  \min_{\vw}\mathcal{L}_s{(\vw, \mathcal{D}_s)}, \hspace{20pt}
s.t.\hspace{5pt}  \vw = \sum_{m\in \mathcal{M}^t}{p_m\vw_m^{t}},\hspace{5pt} p_m\ge 0, \hspace{5pt} and \sum_{m\in \mathcal{M}^t}{p_m} =1, \label{eqn:subspace-obj-0}
\end{align}
\fi
% {\setlength\abovedisplayskip{1pt}
% \setlength\belowdisplayskip{3pt}
\begin{equation}
    \begin{split}
     &\min_{\vw}\mathcal{L}_s{(\vw, \mathcal{D}_s)}, \\
      s.t.\hspace{5pt} & \vw = \sum_{m\in \mathcal{M}^t}{p_m\vw_m^{t}}, \hspace{5pt} and ~~ \vp \in \Lambda,\label{eqn:subspace-obj-0} 
\end{split}
\end{equation}
% \end{align}
% \end{gather}
where $p_m$ is the aggregation coefficient for client $m$, and  $\mathcal{L}_s{(\vw, \mathcal{D}_s)}$ is the empirical risk, $\Lambda$ is defined as
\begin{equation}
    \begin{split}
&\Lambda = \{\vp \in \mathbb{R}^M: p_m \geq 0 \mbox{ for } m \in \mathcal{M}^t,\\
&p_m\equiv 0 \mbox{ for } m \in [M]\setminus \mathcal{M}^t, \hspace{5pt}   \sum_{m \in \mathcal{M}^t}{p_m} =1\}.
\end{split}
\end{equation}
Note that in  solving problem (\ref{eqn:subspace-obj-0}), we optimize $\vw$ over its coefficients $\vp$ with fixed $\vw_{m}^{t}$. We denote $\mathcal{L}_s{(\vw, \mathcal{D}_s)}$, with a slight abuse of notation, as $\mathcal{L}_s{(\vp, \mathcal{D}_s)}$  and the problem can then be rewritten as 
\begin{align}
  \min_{\vp \in \Lambda}\mathcal{L}_s{(\vp, \mathcal{D}_s)}. \hspace{20pt}\label{eqn:FEDST}
%s.t. \hspace{5pt}  p_m\ge 0, \hspace{5pt}  and \sum_{m\in \mathcal{M}^t}{p_m} =1.  
\end{align}
We would like to point out that all the elements in $[M]\setminus \mathcal{M}^t$ are fixed to be 0, and $\Lambda$ is essentially a $|\mathcal{M}^t|$ dimensional set. Thus, we only need to optimize $|\mathcal{M}^t|$ parameters, i.e., $p_m$ with $m\in \mathcal{M}^t$, instead of the entire neural network parameter space. This aggregation process can find the optimal model fusion, i.e., a convex combination of client models trained on non-IID datasets, by learning on the labelled proxy data. Benefiting from such a small optimization space, the generalization ability of our approach can be  significantly reinforced so that it can work well even with a small amount of proxy data. This will be further discussed in our theoretical analysis.  Moreover, when malicious clients exist, our aggregation can mitigate their negative effects by optimizing corresponding $p_m$ to small values. 
% }

% Compared with the regular training in  (\ref{eq_standard}), the number of trainable parameters in our approach (\ref{eqn:FEDST}) is greatly reduced, and thus our aggregation can be optimized efficiently.  Moreover,

% We integrate SmartFL with the following two losses to enable it to leverage both labelled and unlabelled proxy data.

% \textbf{Labelled Samples.}
% For labelled samples on the server, the ground truth labels are utilized as the server knowledge to guide the optimization of combination weights. Then, we use cross entropy loss on the server samples:
% {\setlength\abovedisplayskip{0pt}
% \setlength\belowdisplayskip{3pt}
% \begin{equation}
% \label{eq_label}
%    \mathcal{L}_s{(\vp, \mathcal{D}_s)} = CE \left(  \sigma \left( f (\vp, \mathcal{D}_s) \right),  \mathcal{D}_s\right).  
% \end{equation}}

\begin{algorithm}[t]
\footnotesize
% \SetAlgoLined
\caption{SmartFL.}
\label{alg:feddistill}
    % \Procedure{Server}{}
    \For {each communication round $t = 0, \dots, T$}
    {$\mathcal{M}_t \leftarrow$ selected subset of the $M$ clients%($C$ fraction) of the $M$ clients}\\
    
        \For{ each client $m \in \mathcal{M}_t$ \textbf{in parallel} }
        {
         $\vw_{m}^t \leftarrow \text{Client-LocalUpdate}(m, \vw^{t })$ 
          % \Comment{detailed in Algorithm~\ref{alg:client_update}.}
        }
        initialize the aggregation coefficients with local samples as
        
        $\vp^{t, 0} \leftarrow \tilde{\bm{\alpha}}$ with 
        $\tilde{\alpha}_m = \alpha_m/C^t$ for all $m \in \mathcal{M}^t$ and otherwise $\tilde{\alpha}_m = 0$ 
    % \myState{ initialize for model fusion $\xx_{t, 0} \leftarrow \sum_{k \in \cS_t} \frac{n_k}{ \sum_{k \in S_t} n_k } \hxx_{t}^k$ } \label{ln:average}
         
        \For{ $j$ in $\{ 1, \dots, E_s \}$ }{
            update in mini-batches $\vp^{t, j} \leftarrow \mbox{proj}_\Lambda\left(\vp^{t, j-1} -\eta_s\nabla \mathcal{L}_s(\vp, \mathcal{D}_s)\right)$% with $\Lambda=\{\vp \in \mathbb{R}^{|\mathcal{M}_t|}:\vp \succeq \mathbf{0} \hspace{5pt}  and \sum_{m=1}^{|\mathcal{M}_t|}{\vp_m} =1 \}$}
            % [\autoref{eq_label} or \autoref{eq_unlabel}]% \Comment{detailed in Algorithm~\ref{alg:client_update}.}
        }
    % \myState{ $\vp^{t, j} \leftarrow \vp^{t, j-1}$}
    $\vp^{t} \leftarrow \vp^{t, E_s}$
    
     $\vw^{t+1} \leftarrow \sum_{m \in \mathcal{M}_t}{p_m^{t}\vw_m^t}$}
    
    % \EndFor
    \Return $\vw_{T+1}$
    % \EndProcedure
\end{algorithm}
% \vspace{-6pt}
\textbf{Extension to Unlabelled Samples.}
Note that for the labelled proxy data, the loss function $\ell(\cdot, \cdot)$ of the empirical risk $\mathcal{L}_s\left(\boldsymbol{w}, \mathcal{D}_s\right)$ for server-side optimization is the same as the global optimization in (\ref{eqn:obj}), which is cross entropy loss in practice. 
To further facilitate the practical usage for different conditions and fairly compare with the full-space training solutions using unlabelled proxy data for heterogeneous FL~\cite{lin2020ensembleFedDF, chen2020fedbe}, we provide an extension to unlabelled samples (\textbf{SmartFL-U}). Specifically, we utilize the exact strategy in the prior work~\cite{lin2020ensembleFedDF} to generate pseudo labels with clients' ensemble logits and use Kullback-Leibler divergence loss to  drive the global model to mimic the prediction of the ensemble of client models. The only difference is that we train the model in the reduced subspace instead of full space in ~\cite{lin2020ensembleFedDF}. Since there is no quality guarantee for the pseudo labels generated from client predictions, SmartFL-U is only applied for the empirical study of handling heterogeneous data distribution. Our theoretical analysis and studies on FL with poisoning attacks focus on SmartFL with labelled proxy data.
% {\setlength\abovedisplayskip{1pt}
% \setlength\belowdisplayskip{3pt}
% \begin{equation}
% \label{eq_unlabel}
%  \mathcal{L}_s{(\vp, \mathcal{D}_s)} = KL \left( \sigma \left( \frac{1}{ M } \sum f ( \vw_m, \mathcal{D}_s) \right), \sigma \left( f (\vp, \mathcal{D}_s) \right) \right),
% \end{equation}
% where $f$ denotes the model function.}

% \vspace{-6pt}

\textbf{Implementation and Overall Process.}
Algorithm \ref{alg:feddistill} demonstrates the overall process of SmartFL. The optimization process in (\ref{eqn:FEDST}) can be solved by general  projected stochastic gradient descent algorithms~\cite{zhou2021efficient}.
\subsection{Theoretical Analysis}\label{ss_ana}
% \vspace{-4pt}
In this section, we provide a convergence property of SmartFL under poisoning attacks. Then, we show the advantages of SmartFL over FedAVG and full-space training regarding generalization capacity. Detailed description and derivations are deferred to \Appendix Sec. 2 .
% without loss of generality on non-IID distribution without poisoning attack.

% We provide 
% To avoid redundancy, we provide the convergence result of SmartFL when facing poisoning attacks, which contains  the non-IID distribution applications as special cases since the data in poisoning attack can be non-IID. Therefore, this result holds for the data non-IID applications as well. 

\begin{property}
[\textbf{Convergence}] \label{thm:convergence}\textup{[informal]}
Assume in each server-side aggregation, there exists at least one honest client among the $M$ sampled clients. With other assumptions specified in the appendix, the expected error of SmartFL, i.e., $\mathbb{E}\left[\mathcal{L}(\vw^{T},\mathcal{D}) - \mathcal{L}(\vw^*,\mathcal{D})\right]$, can converge linearly as $T \rightarrow \infty$. 
\end{property}

% \begin{remark}
%     The above result shows that $SmartFL$ can converge to the optimum efficiently even when a large number of malicious clients exist. It is consistent with our empirical results (see Section~\ref{exp-attack}), in which we observe that the coefficients of these malicious clients are reduced to nearly zero.  
% \end{remark}
% \begin{remark}
%     In Property \ref{thm:convergence}, we allow the data on the clients to be non-IID. Therefore, this result holds naturally for the data non-IID cases without poisoning attacks.
% \end{remark}

\begin{remark}
     The above result shows that $SmartFL$ can converge to the optimum $\vw^*$ in the global optimization problem (\ref{eqn:obj}) efficiently even when a large number of malicious clients exist, which is consistent with our empirical results (see Section~\ref{exp-attack}). Note that in Property \ref{thm:convergence}, we allow the data on the clients to be non-IID. Therefore, this result holds naturally for the cases of non-IID data distribution without poisoning attacks.
\end{remark}

\begin{property}[\textbf{Generalization in Aggregation}]\label{thm:generalization}\textup{[informal]}
 Assume $\Lambda$ contains $|\Lambda|$ discrete choices. Denote the dataset $\mathcal{D}_s^{-1}$ generated by replacing one sample
in $\mathcal{D}_s$ with another arbitrary sample. We assume there exists $\kappa>0$, such that  $|\mathcal{L}_s(\vw, \mathcal{D}_s)-\mathcal{L}_s(\vw, \mathcal{D}_s^{-1})|\leq \kappa/|\mathcal{D}_s|$ for all $\vw$. Given the received client models $\vw_m^{t}$, $m \in \mathcal{M}^t$ in round $t$, with the probability at least $1-\delta$, the server-side aggregations $\vw_{Smart}$ of  SmartFL satisfies the generalization upper bound:
 {\small
 \begin{align}
    \mathbb{E}_{\mathcal{D}} \mathcal{L}(\vw_{Smart}, \mathcal{D}) \leq \mathcal{L}_s(\vw_{Smart}, \mathcal{D}_s) +\kappa \sqrt{\frac{\ln(2|\Lambda|/\delta)}{2|\mathcal{D}_s|}}+C, \label{eqn:generalization-bound}
 \end{align}
 where $C$ comes the domain discrepancy between $\mathcal{D}$ and $\mathcal{D}_s$, i.e.,
 \begin{align}
 C=\frac{1}{2}d_{\mathcal{H}^t\Delta\mathcal{H}^t}(\tilde{\mathcal{D}}, \tilde{\mathcal{D}}_s) + \lambda,
 \end{align}
 with $\tilde{\mathcal{D}}$ and $\tilde{\mathcal{D}}_s$ being the distribution of $\mathcal{D}$ and $\mathcal{D}_s$, $d_{\mathcal{H}\Delta\mathcal{H}}$ being the domain discrepancy between two distributions, $\lambda = \min_{\vp} \mathbb{E}_{\mathcal{D}}\mathcal{L}(\vp, \mathcal{D}) + \mathcal{L}(\vp, \mathcal{D}_s)$. $\mathcal{H}^t$ the subspace in round $t$.
 }
 
%  \textup{(ii)} when the L2-norm is adopted, SmartFL has the generalization gap with FedAVG:
%  {\small
%  \begin{equation}
%  \begin{split}
%    \mathbb{E}_{\mathcal{D}_s} \mathcal{L}_s(\vw_{Smart}, \mathcal{D}_s) \leq & \mathbb{E}_{\mathcal{D}_s} \mathcal{L}_s(\vw_{AVG}, \mathcal{D}_s) - \frac{\lambda}{2}\|\vp^*- \tilde{\bm{\alpha}}\|^2\\
%    &+ \kappa \sqrt{\frac{\ln(4|\Lambda|/\delta)}{2|\mathcal{D}_s|}} +\kappa\sqrt{\frac{\ln(4/\delta)}{2|\mathcal{D}_s|}}, \label{eqn:generalization-diff}
%  \end{split}

%  \end{equation}
% where $\vp^*$ is the optimum of problem (\ref{eqn:FEDST}), $\tilde{\bm{\alpha}}$ is defined in  Eqn.(\ref{eqn:FEDST-L2}), and the expectation is taken over the global data distribution since $\mathcal{D}_s$ is sampled from the global distribution $\mathcal{D}$.}
\end{property}

\begin{remark}
    The  bound in Eqn.(\ref{eqn:generalization-bound}) demonstrates that, in each aggregation,  SmartFL can generalize well because of the extremely small set $\Lambda$, which is essentially a $|\mathcal{M}^t|$-dimension space. We can also see that this upper bound is independent of the model size. In contrast, for the  generalization bound of  the full-space training approaches corresponds to Eqn.(\ref{eqn:generalization-bound}), $|\Lambda|$ should be replaced by $|\mathcal{W}|$, which would be larger than $|\Lambda|$ by lots of orders of magnitude due to the high dimension. Moreover, $C$ can be small if $\mathcal{D}$ and $\mathcal{D}_s$ are collected from two similar distributions. This verifies the superiority of  SmartFL in generalization over full-space training approaches. 
    % The result in Eqn.(\ref{eqn:generalization-diff}) shows that SmartFL can generalize better than FedAVG by learning the weights for each client model, since $\frac{\lambda}{2}\|\vp^* - \bm{\alpha}\|^2$ can dominate the last two items in Eqn.(\ref{eqn:generalization-diff}).  These properties are consistent with our empirical results in Section~\ref{s_exp_noniid}.
\end{remark}

%\cite{hoffman2018algorithms} shows linear combination may perform poorly. 

%proofs of \cite{lin2020ensembleFedDF}

% \vspace{-6pt}

\begin{table*}[t!]
    \centering
 \vspace{-0.25cm}

    \resizebox{\textwidth}{!}{%
\begin{tabular} {l|llllll|llllll}
\toprule
&\multicolumn{6}{c}{CIFAR-10} & \multicolumn{6}{c}{FMNIST}\\
 &        \multicolumn{2}{c}{$\alpha=0.01$} & \multicolumn{2}{c}{$\alpha=0.04$} & \multicolumn{2}{c}{$\alpha=0.16$}&        \multicolumn{2}{c}{$\alpha=0.01$} & \multicolumn{2}{c}{$\alpha=0.04$} & \multicolumn{2}{c}{$\alpha=0.16$}\\
 % \cline{2-3} \cline{4-6} \cline{7-9}
Method & $C=40\%$ &        $C=20\%$ &    $C=40\%$ &            $C=20\%$ &       $C=40\%$ &       $C=20\%$ & $C=40\%$ &        $C=20\%$ &    $C=40\%$ &            $C=20\%$ &       $C=40\%$ &       $C=20\%$ \\
\midrule
FedAVG~\cite{mcmahan2017communication}    &   33.94$\pm$2.13 & 26.20$\pm$3.94 &   58.75$\pm$2.46       & 57.14$\pm$2.49& 68.77$\pm$0.75 &  70.98$\pm$0.24&   74.10$\pm$1.46 &72.56$\pm$1.55  &   87.19$\pm$2.53      & 85.15$\pm$2.49 & 90.37$\pm$0.30& 91.59$\pm$0.51\\
FedProx~\cite{li2020federated}    &   37.69$\pm$1.76 &  \underline{36.17$\pm$3.65} &   60.05$\pm$1.25      & 59.43$\pm$1.40 & 68.28$\pm$0.48 &  70.93$\pm$1.45 &    76.06$\pm$2.56 & 73.67$\pm$1.00 &   \underline{89.44$\pm$0.91}       & 86.43$\pm$1.11&  91.39$\pm$0.14& 91.72$\pm$0.42\\
Scaffold~\cite{karimireddy2019scaffold}    & 37.93$\pm$2.21 &  29.97$\pm$3.01 & 59.24$\pm$1.76      & 57.52$\pm$1.40 & 68.57$\pm$0.94& 71.06$\pm$0.47&   78.34$\pm$1.34 & 73.04$\pm$1.13 &89.01$\pm$0.96 & 86.35$\pm$1.05& 91.23$\pm$0.17& 91.19$\pm$0.30 \\
\midrule
FedDF~\cite{lin2020ensembleFedDF}$^*$     &   35.17$\pm$1.18 &   25.24$\pm$1.74 &   59.03$\pm$1.30      & 58.88$\pm$0.72 & 68.53$\pm$1.27 &  70.35$\pm$1.57&   75.24$\pm$2.35 & 74.40$\pm$0.86 &   87.48$\pm$1.86       & 87.15$\pm$2.08& 91.37$\pm$0.14 & 91.77$\pm$0.13\\
FedBE~\cite{chen2020fedbe}$^*$  &   35.97$\pm$1.76 &   26.55$\pm$3.37 &   58.27$\pm$1.11       & 58.73$\pm$3.91 & 69.06$\pm$0.70 &  70.24$\pm$1.25&   75.10$\pm$1.39 & 74.21$\pm$0.77 &   88.32$\pm$0.62       & 86.76$\pm$2.10& 91.32$\pm$0.28 & 91.73$\pm$0.23\\
\textbf{SmartFL-U$^*$}   &   40.02$\pm$1.64 &   31.21$\pm$2.87 &   60.12$\pm$0.96  & \underline{60.51$\pm$1.50} & \underline{69.90$\pm$0.66} & \underline{71.34$\pm$0.45} &   77.48$\pm$1.40 & 75.02$\pm$0.86 &   89.31$\pm$0.70       & \underline{88.01$\pm$1.93}& \underline{91.63$\pm$0.29} & \underline{91.94$\pm$0.17}\\
\midrule
ABAVG~\cite{xiao2021novel}$^\dagger$    &   35.87$\pm$2.69 &   29.93$\pm$4.89   &   \underline{61.32$\pm$2.05} & 60.49$\pm$2.59 & 69.34$\pm$0.86 &  71.02$\pm$1.07&   74.09$\pm$1.36 & 71.85$\pm$2.61 &   88.43$\pm$1.63       &86.32$\pm$2.73& 91.20$\pm$0.19 & 91.71$\pm$0.36\\
Finetuning$^\dagger$    &   \underline{44.98$\pm$2.69} &   33.60$\pm$5.23 &   60.23$\pm$0.45      & 60.29$\pm$0.73 &   68.75$\pm$0.36 & 71.10$\pm$0.49&   \underline{82.52$\pm$0.39} & \underline{80.93$\pm$0.68} &   89.21$\pm$0.29       & 87.11$\pm$1.70& 90.53$\pm$0.50 & 91.40$\pm$0.33\\
\textbf{SmartFL$^\dagger$}    &   \textbf{52.96$\pm$1.52} &   \textbf{49.97$\pm$1.46} &   \textbf{62.73$\pm$0.91}       & \textbf{63.95$\pm$0.44} & \textbf{70.28$\pm$0.45} &  \textbf{71.68$\pm$0.29}&   \textbf{83.76$\pm$0.40} & \textbf{82.77$\pm$0.11} &   \textbf{90.39$\pm$0.14}       & \textbf{90.18$\pm$0.29} &\textbf{92.01$\pm$0.38}  & \textbf{92.14$\pm$0.21}\\
\bottomrule
\end{tabular}}
        \caption{Comparison of converged test accuracy of different FL methods with ResNet-8 on CIFAR-10 and FMNIST in $T=200$ communication rounds with different degrees of data heterogeneity $\alpha$ 
 and participation rates $C$. $^*$Methods assume the availability of unlabelled proxy data. $^\dagger$Methods assume the availability of labelled proxy data.}
% \vspace{-0.35cm}
\label{tab:overview_1}
\end{table*}
\begin{figure*}[t]
    \begin{subfigure}{0.49\linewidth}
        \centering
        \includegraphics[width=1.0\linewidth]{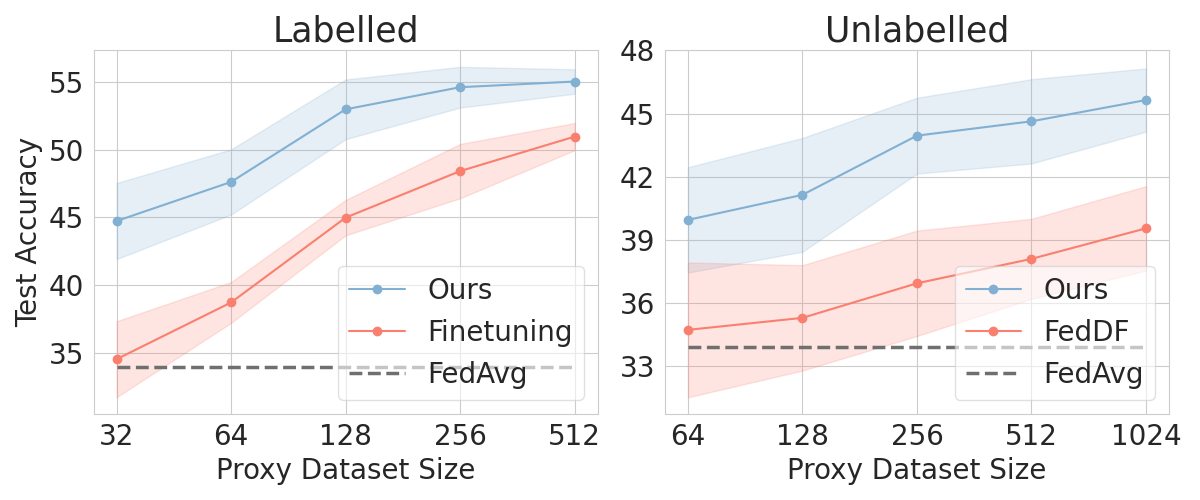}
        \subcaption{CIFAR-10}
    \end{subfigure}
        \begin{subfigure}{0.49\linewidth}
        \centering
        \includegraphics[width=1.0\linewidth]{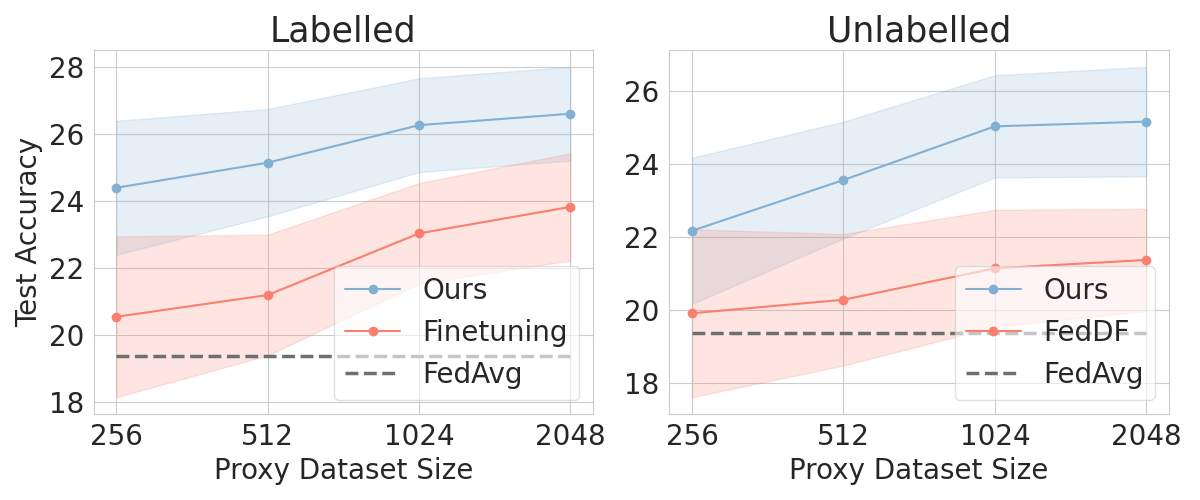}
        \subcaption{CIFAR-100}
    \end{subfigure}
     % \vspace{-0.2cm}
    \caption{\textbf{Effects of the amount of server data}. We compare our method with Finetuning and FedDF with labelled and unlabelled server data, respectively. The horizontal axis is the size of the server data, while the vertical axis is the test accuracy. We can see that our method shows superior performances given different amounts of server data in both scenarios.}
     % \vspace{-0.2cm}
    \label{fig:val_size}
\end{figure*}
\section{Experiments}
% \vspace{-4pt}
\subsection{Setup}
% \vspace{-3pt}
\label{ss_exp_setup}
Due to limited space, more detailed descriptions and settings can be found in \Appendix Sec. 3.

\textbf{Datasets, models, and settings.}
We consider four computer vision datasets, i.e., CIFAR-10/100~\cite{krizhevsky2009learning}, and MNIST~\cite{deng2012mnist}, FMNIST~\cite{xiao2017/online}, and extend to a NLP dataset, 20 Newsgroup~\cite{lang1995newsweeder}.
% for both computer vision and natural language processing tasks.
% Detailed 
% For datasets, we consider image classification dataset 
% Detailed dataset descriptions are illustrated in \Appendix Sec. 3.
% \autoref{suppl-sec:exp_s}.
We evaluate different FL methods on the architectures of logistic regression, 2-layer ConvNet~\cite{lecun1998gradient}, MobileNet\cite{howard2017mobilenets}, ResNet-8~\cite{he2016deep} and ShuffleNet~\cite{ma2018shufflenet}. 
For the methods involving on-server data, we randomly sample 128 training samples as unlabelled/labelled proxy data on the server by default, and the others are distributed to the clients.
For other models, all the training data are distributed to clients.
Note that the numbers of total training samples for all the methods are the same.
We evaluate the FL methods with the official test set with the global model. 

\textbf{Federated learning environment.}
Similar to the prior studies~\cite{Fedaux,chen2021bridging}, we consider FL system with a practical number $n = 80$ clients with partial participation rate $C \in \{ 20\%, 40\%, 60\%\}$.  To simulate \textbf{non-IID data distributions} across clients, we follow prior studies~\cite{lin2020ensembleFedDF,chen2020fedbe} to use the Dirichlet distribution to create non-IID distribution of client training data~\cite{hsu2019measuring}. The parameter $\alpha$ controls the degree of non-IIDness.
The smaller the value of $\alpha$, the partition is closer to that one client only holds samples from a single class. Overall, we consider various non-IID degrees with $\alpha \in \{0.01, 0.04, 0.1, 0.16, 0.32, 0.64, 1\}$. For the studies involving \textbf{poisoning attacks}, we consider both data poisoning and model poisoning attacks, including Label Flip Attack~\cite{fung2018mitigating}, Omniscient Attack~\cite{blanchard2017machine}, and Fang Attack~\cite{fang2020local}.
% which represent the data poisoning attack and model poisoning attack for FL, respectively. 
% More can be found in \Appendix Sec. 3.
% Specifically, Label Flip Attack switches the label to be the next class of the ground truth, while Omniscient Attack negates the original benign gradients.
% Smaller $\alpha$ indicates a higher degree of heterogeneity.
% As shown in prior studies~\cite{li2022federated}, the problem of heterogeneous distribution is getting significant with the growth of non-IID degree, while it is a practical case when the multiple parties only hold one or two classes of data. Therefore, we focus on high non-IID degree with $\alpha = \{0.01, 0.04, 0.16\}$. 

% Additional experiments for more degrees of heterogeneity are demonstrated in~\autoref{suppl-sec:exp_r}.

\textbf{Baselines.}
We consider both state-of-the-art solutions against non-IID data and poisoning attacks.
For the studies on \textbf{robustness against non-IID distribution without poisoning attacks}, we include 1) without proxy data: FedAVG~\cite{mcmahan2017communication}, FedProx~\cite{li2020federated}, Scaffold~\cite{karimireddy2019scaffold}, 2) with unlabelled proxy data (i.e., FedDF~\cite{lin2020ensembleFedDF} and FedBE~\cite{chen2020fedbe}), 3) with labelled proxy data (i.e., ABAVG~\cite{xiao2021novel}) and full-space Finetuning with labelled proxy data.
For the studies on \textbf{robustness against poisoning attacks under different scenarios}, besides the applicable ones of the mentioned solutions, we further include Median~\cite{yin2018byzantine}, Krum~\cite{blanchard2017machine}, and Trimmed Mean~\cite{yin2018byzantine}, and the state-of-the-art defense with the availability of labelled proxy data, i.e., Sageflow~\cite{sageflow} and FLTrust~\cite{fltrust}.
% Detailed descriptions and settings of baselines are shown in \Appendix Sec. 3.
% \autoref{suppl-sec:exp_s}.

% Our method targets effective on-server aggregation methods with the availability of a limited number of server proxy data for heterogeneous federated learning.
% Thus, we focus on the comparison with state-of-the-art aggregation strategy .
%     Therefore, besides FedAVG~\cite{mcmahan2017communication} and FedProx~\cite{li2020federated}, we focus on comparison with server-side aggregation strategies with proxy data. There are mainly two types of methods:
% 1) FL methods with server unlabelled data: FedDF~\cite{lin2020ensembleFedDF} and FedBE~\cite{chen2020fedbe}. 2) FL methods with labelled data: ABAVG~\cite{xiao2021novel} and Finetune.
% Specifically, FedProx focuses on improving local training.
% FedBE and FedDF are based on knowledge distillation with auxiliary unlabelled data, initialized with FedAVG.
% ABAVG aggregates model based on the validation accuracy of clients' models on server labelled data. Finetuning is the regular training on the server labelled data of the global model, with the initialization of FedAVG.
% For the studies on robustness against poisoning attacks, we further compare with robust aggregation solutions, including Median~\cite{yin2018byzantine}, Krum~\cite{blanchard2017machine}, and Trimmed Mean~\cite{yin2018byzantine}, and the state-of-the-art defense with the availability of labelled proxy data.
% More detailed setting and tuning strategies are listed in~\autoref{suppl-sec:exp_s}.

\textbf{Local training and server aggregation setting.}
For the local training of all methods, we use the learning rate of $\eta=10^{-3}$ and the batch size of $32$ with Adam optimizer~\cite{kingma2014adam}. Local training epoch E is set to 1, and the total round is 200 by default. 
% \textbf{Detailed server aggregation setting.}
% For the methods that include on-server optimization, we set the hyperparameters as follows. 
% First, since we focus on the availability of a limited amount of data, which is more practical in real-world scenarios, the available sample number is set to 128 (around 0.2\% of the whole training dataset) for the main study on CIFAR10 by default. We also provide comprehensive studies on the influence of sample numbers on CIFAR-10/100.  
For the on-server optimization of our method, we use the batch size of 32 and Adam optimizer, fix $E_s =20$, and $\eta_s = 1e-2$ for \textbf{SmartFL} with labelled data, $\eta_s = 5e-4$ for \textbf{SmartFL-U} with unlabelled data by default.
% for \textbf{FedST}$^\dagger$ with labelled data, we fix $\eta_s = 1e-2$ and $E_s =20$; for 
% Since the optimal balance parameter $\lambda$ depends on multiple factors (e.g., dataset, participation rates, and heterogeneity), we always tune $\lambda \in \{0, 1, 5, 15\}$.
% For other baseline models involving on-sever optimization with unlabelled/labelled data, the learning rate $\eta_s$ is tuned from $[5e-5, 1e-2]$, and the epochs is tuned from $E_s = \{1, 5, 10, 20\}$. 
% More detailed hyperparameter settings are demonstrated in \Appendix Sec. 3.
% ~\autoref{suppl-sec:exp_s}.

% \vspace{-4pt}
\subsection{Robustness against Non-IID Data Distribution}
\label{s_exp_noniid}
% \vspace{-4pt}

\subsubsection{Performance Overview for Different Scenarios}
We evaluate the performance of SmartFL on widely-used benchmarks of image classification on CIFAR-10 and FMNIST under various scenarios. 
% For FedAVG, FedProx, and Scaffold, all the training data are distributed to clients. For other models, 128 samples are split randomly as server proxy labelled/unlabelled data, and others are distributed for training. Accuracy is evaluated with the official test set with the global model.
\autoref{tab:overview_1} summarizes the results. Our observations are as follows: First, FedAVG suffers from significant performance degradation when the data distribution is highly non-IID, and FedProx and Scaffold can alleviate the problem to some extent by modifying local training. Note that they are orthogonal with server-side aggregation and can be compatible with our methods. Second, leveraging a practical amount of server proxy data with advanced aggregation strategies can improve performance in most cases, indicating the potential of improving aggregation with reasonable server knowledge.
Third, for both data availability settings of labelled and unlabelled data, SmartFL and SmartFL-U consistently outperform the full-space training tuning counterpart, i.e., Finetuning and FedDF, as well as the advanced ensemble solution FedBE and heuristic reweighting solution ABAVG by a noticeable margin under various non-IIDness and participation rate settings. 
Moreover, we further demonstrate in \Appendix that SmartFL greatly accelerates convergence and requires much fewer communication rounds to achieve the target accuracy. More experiments on \textbf{20newsgroup} are shown in \Appendix Sec. 4.
% \ref{supp-nlp}. 
% Overall, the results indicate that our extracted subspace effectively and efficiently optimizes the server-side aggregation for heterogeneous FL.
Overall, the results indicate that SmartFL effectively improves the robustness of server-side aggregation against non-IID data distribution.

% \begin{wrapfigure}{r}{3.5cm}
% \vspace{-0.2cm}
% \centering
% 	% \footnotesize
% 	% \vspace{-0.5cm}
% \includegraphics[width=\linewidth]{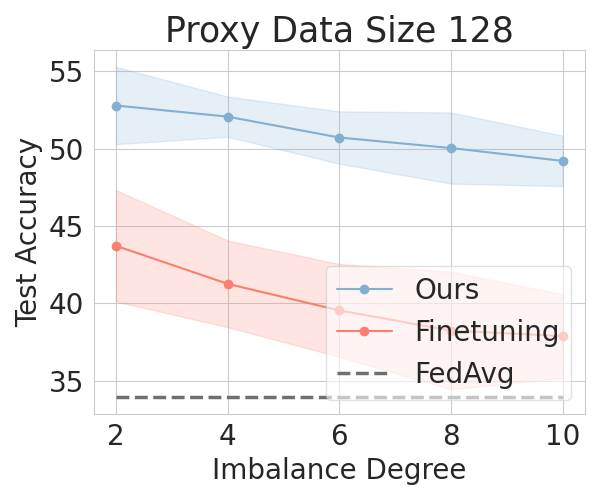}
% \vspace{-0.4cm}
% \caption{Comparison of effects of \textbf{biased server proxy data} on SmartFL and Finetuning.}
% \vspace{-0.6cm}
% \label{fig:imbalance}
% \end{wrapfigure}
% \textbf{In-depth Analysis.}
\subsubsection{In-depth Analysis}
\label{s_exp_indepth}
\textbf{Effect of the amount of server data.} 
We investigate the effect of the amount of server data on CIFAR-10/100, under the high level of heterogeneity with $\alpha=0.01, C=0.4$.
For labelled/unlabelled data, we compare SmartFL/SmartFL-U with FedAVG and full-space training counterpart Finetuning/FedDF. As shown in \autoref{fig:val_size}, with a reasonable amount of proxy data, 
all the optimization strategies outperform FedAVG and benefit from the increase of available data. Our solution consistently outperforms the full-space training counterparts in two datasets for both labelled and unlabelled data. This aligns with our intuition that a limited amount of proxy data can not well supervise the learning of a deep learning model with massive parameters, while our extracted subspace effectively solves the problem and enables taking advantage of even a small amount of data.

\begin{figure}[t!]
        \begin{subfigure}{.49\linewidth}
            \centering
        \includegraphics[width=1.0\linewidth]{Figures/imbalance.png}
            % \subcaption{CIFAR-10}
        \end{subfigure}
        \begin{subfigure}{.49\linewidth}
        \centering
        \includegraphics[width=1.0\linewidth]{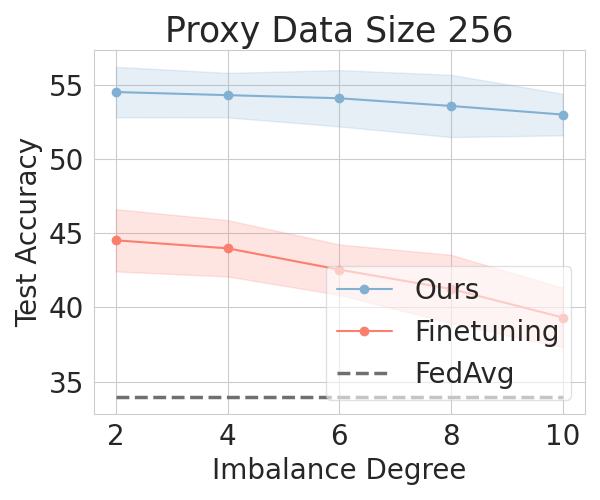}
        % \subcaption{FMNIST}
        \end{subfigure}
        % \vspace{-0.2cm}
\caption{\textbf{Effect of the distribution of server proxy data.} Comparison of effects of biased server proxy data on SmartFL and Finetuning. When the data imbalance degree grows, SmartFL maintains a remarkable improvement compared with FedAVG, while the improvement of Finetuning becomes slight.}
% \vspace{-0.6cm}
\label{fig:imbalance}
% \end{wrapfigure}
         % \vspace{-0.4cm}
    % \label{fig:exp-overfit}
\end{figure}

\begin{figure}[t!]
        \begin{subfigure}{.49\linewidth}
            \centering
        \includegraphics[width=1.0\linewidth]{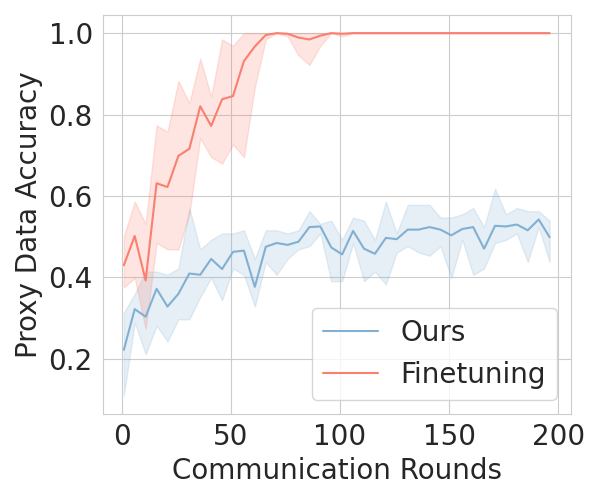}
            \subcaption{Proxy Data Accuracy}
        \end{subfigure}
        \begin{subfigure}{.49\linewidth}
        \centering
        \includegraphics[width=1.0\linewidth]{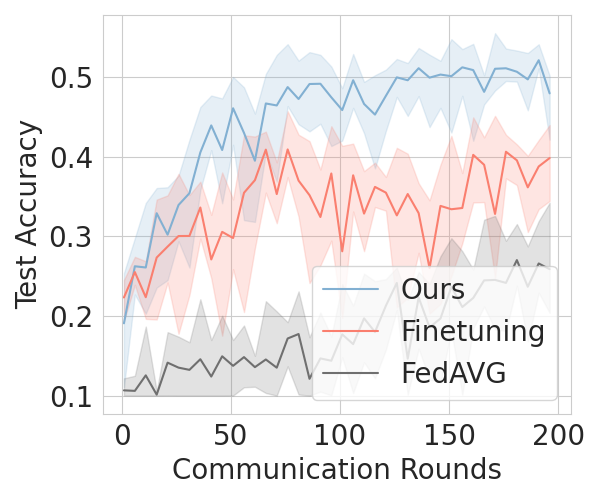}
        \subcaption{Test Accuracy}
        \end{subfigure}
        % \vspace{-0.2cm}
        \caption{\textbf{Generalization ability with a small amount of server data}. We compare SmartFL with Finetuning, which directly finetunes the model parameters with the proxy data in every communication round. We can observe that Finetuning quickly reaches 100\% accuracy on the server data, while our method prevents overfitting and consistently demonstrate better test performance.}
         % \vspace{-0.4cm}
    \label{fig:exp-overfit}
\end{figure}
\textbf{Effect of the distribution of server data.}
We study the influence of proxy data distribution. \autoref{fig:imbalance} shows the performance with various degrees of gap between server proxy data and global distribution with ResNet8 on CIFAR-10. Here we apply the strategy in the work~\cite{shu2019meta} to use the imbalance degree calculated with the maximum class sample number divided by the minimum class sample number. The higher the imbalance degree, the larger the distribution discrepancy between proxy data and global data with balanced classes. By optimizing the global model in a constrained optimization space, SmartFL boosts performance even with proxy data with a highly-different distribution, which further verifies its feasibility in real-world scenarios.
% by optimizing the global model in a constrained optimization space.
% Overall, the results verify that by optimizing the coefficient $\vp$, which has a much lower dimension than the model parameters $\vw$, our method is less prone to overfitting the proxy data and more robust to imbalanced proxy data. 

% \setlength{\belowcaptionskip}{-0.1cm}   

\begin{table}
\tiny
    \centering
 \vspace{-0.25cm}
    % \resizebox{\textwidth}{!}{%
\begin{scriptsize}
\begin{tabular}{ccccc}

\toprule
%  &        \multicolumn{2}{c}{$\alpha=0.01$} &  & \multicolumn{2}{c}{$\alpha=0.04$} & & \multicolumn{2}{c}{$\alpha=0.16$}\\
 &        \multicolumn{2}{c}{$\alpha=0.01$} & \multicolumn{2}{c}{$\alpha=0.04$}\\
 % \cline{2-3} \cline{4-6} \cline{7-9}
Method &   5 epochs  & 10 epochs  &   5 epochs  & 10 epochs \\
\midrule
FedAVG     &  19.97$\pm$4.62 & 18.30$\pm$2.62 &   44.42$\pm$4.69    & 36.80$\pm$1.58 \\
FedProx   &  \underline{33.31$\pm$1.25} & \underline{30.16$\pm$1.89} &   48.76$\pm$0.55    & \underline{45.58$\pm$1.37} \\
Scaffold    &   31.06$\pm$1.30 & 29.85$\pm$1.53 &   46.35$\pm$0.38    & 43.06$\pm$1.24 \\
\midrule
FedDF$^*$     & 27.43$\pm$1.72 & 24.83$\pm$2.99  &   45.73$\pm$1.12    & 38.14$\pm$0.81 \\
FedBE$^*$   & 22.95$\pm$3.84 & 21.20$\pm$4.01&   45.25$\pm$2.01    & 37.36$\pm$0.98 \\
\textbf{SmartFL-U}$^*$    & 31.13$\pm$3.83 & 28.35$\pm$2.07 &   \underline{48.78$\pm$0.85}    & 41.12$\pm$2.35 \\
\midrule
ABAVG$^\dagger$   &  26.53$\pm$3.97 & 23.75$\pm$5.11 &   44.54$\pm$0.59    & 38.62$\pm$0.57 \\
Finetuning$^\dagger$   &  25.08$\pm$4.90 & 23.00$\pm$3.61 &   44.20$\pm$2.48    & 38.47$\pm$2.34 \\
\textbf{SmartFL$^\dagger$}    &   \textbf{48.63$\pm$1.19} & \textbf{50.40$\pm$0.56} &   \textbf{60.03$\pm$1.05}    & \textbf{60.33$\pm$0.65} \\ 
\bottomrule
\end{tabular}
\caption{\textbf{Effect of local epochs.} Comparison of top-1 test accuracy achieved by different FL methods with Resnet-8 on CIFAR-10 under various degrees of data heterogeneity and local training epochs. SmartFL maintains high effectiveness.}
\label{tab:local_epoch}
\end{scriptsize}
\end{table}

\textbf{Generalization ability.}
We then empirically demonstrate the generalization ability with a small amount of server data by comparing SmartFL with the full-space training counterpart, i.e., Finetuning, on CIFAR-10.
% in depth why FedST has the capacity to improve the global model given few server data, 
% The key strength of our proposed method lies in its ability to improve the global model given few server data. 
% We empirically demonstrate this bganxy comparing our method with the regular Finetuning counterpart, which directly uses the server data to optimize the model parameters. Specifically, we run the experiments on CIFAR-10 dataset and set $\alpha$ to 0.01. 
As shown in \autoref{fig:exp-overfit}, for the Finetuning approach, even though we only finetune the aggregated model for one epoch on the server at each round to try to eliminate overfitting, the accuracy calculated over the proxy data still converges to 100\% after multiple rounds, while the test accuracy does not boost significantly. On the other hand, though our method does not achieve perfect proxy data accuracy, its test performance consistently surpasses the Finetuning, which verifies that subspace training enables SmartFL to be less prone to overfitting the proxy data and boost the test performance.

\textbf{Effect of local training epochs.} 
We explore the effect of different local epochs (\autoref{tab:local_epoch}). For local epochs 5 and 10, we set the number of communication rounds to 100 and 50, respectively.  Generally, when the local epoch increases, the diversity of local updates increases, which leads to degraded performance for various FL methods. Compared with the baselines, SmartFL can alleviate the problem and maintain robustness against data heterogeneity for different local epochs.  
\begin{figure*}[t!]
    \centering
\includegraphics[width=0.95\linewidth]{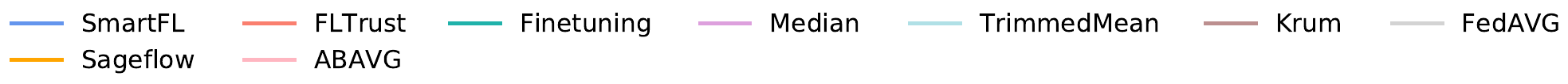}
    \begin{subfigure}{1.\linewidth}
        \begin{subfigure}{0.24\linewidth}
        \centering
        \includegraphics[width=1.0\linewidth]{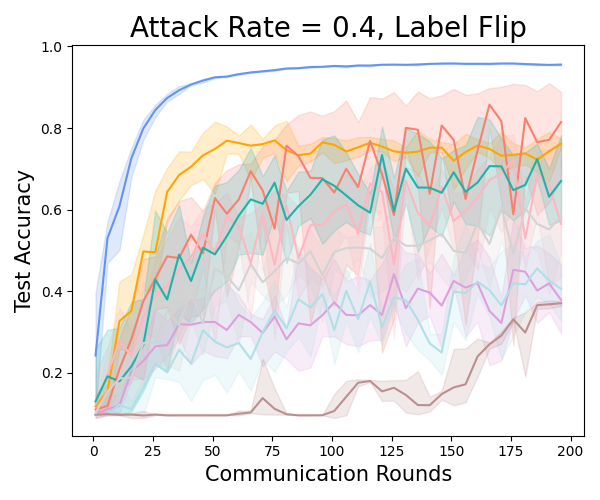}
        \end{subfigure}
        \begin{subfigure}{0.24\linewidth}
        \centering
        \includegraphics[width=1.0\linewidth]{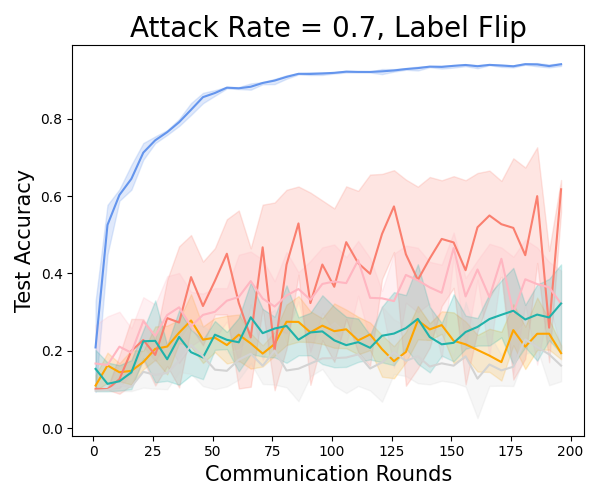}

    \end{subfigure}
    \begin{subfigure}{0.24\linewidth}
        \centering
        \includegraphics[width=1.0\linewidth]{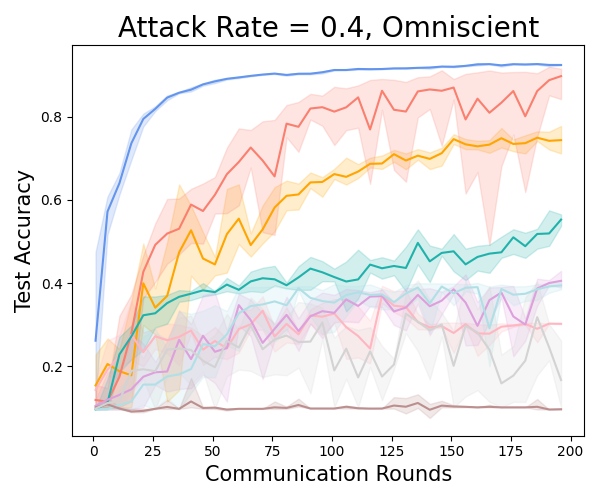}
        \end{subfigure}
    \begin{subfigure}{0.24\linewidth}
        \centering
        \includegraphics[width=1.0\linewidth]{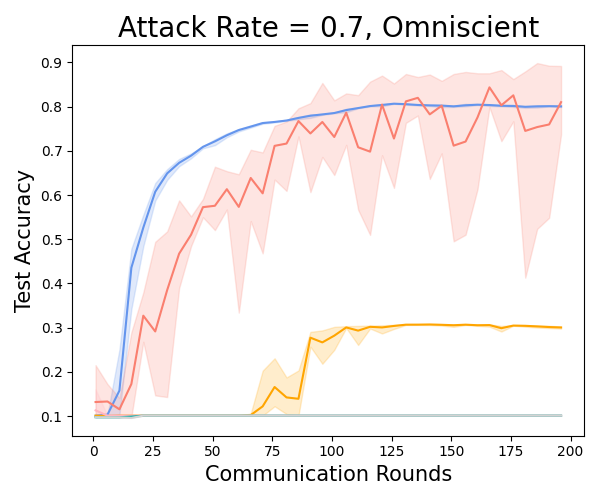} 
    \end{subfigure}
    \subcaption{MNIST}
    \end{subfigure}

    \begin{subfigure}{1.\linewidth}
    \begin{subfigure}{0.24\linewidth}
        \centering
        \includegraphics[width=1.0\linewidth]{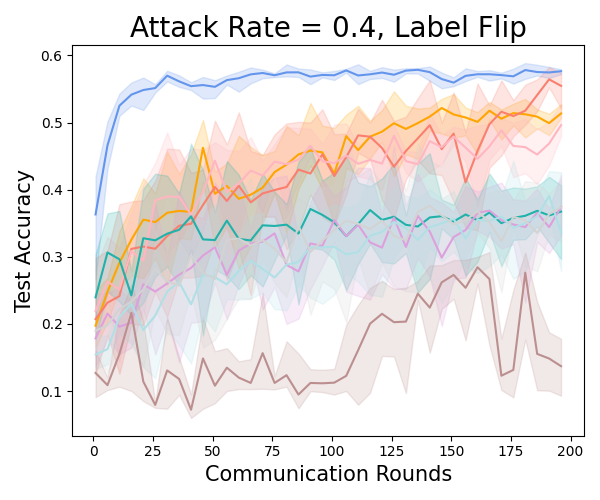}
    \end{subfigure}
        \begin{subfigure}{0.24\linewidth}
        \centering
        \includegraphics[width=1.0\linewidth]{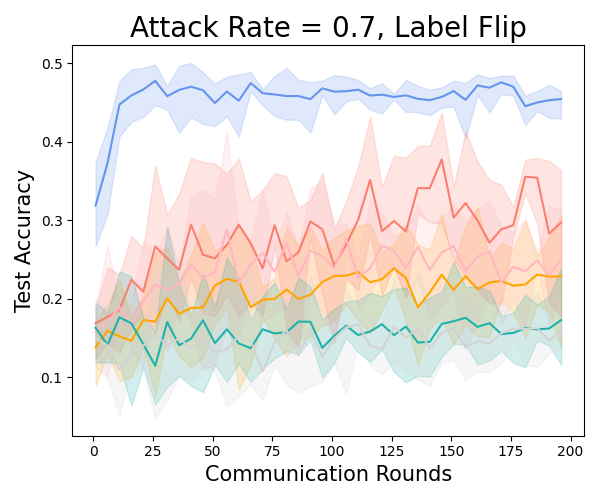}

    \end{subfigure}
    \begin{subfigure}{0.24\linewidth}
        \centering
        \includegraphics[width=1.0\linewidth]{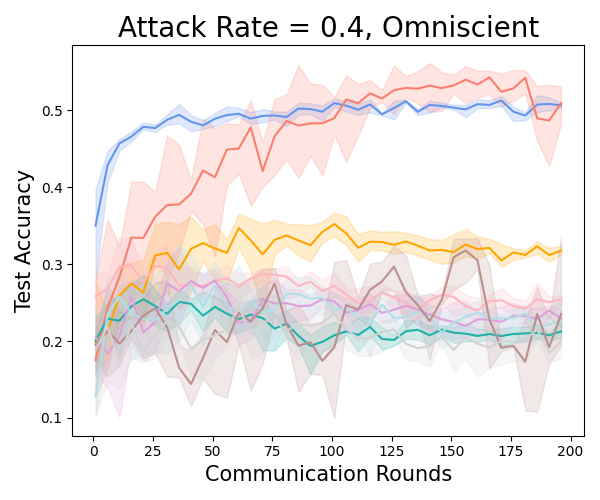}
    \end{subfigure}
    \begin{subfigure}{0.24\linewidth}
        \centering
        \includegraphics[width=1.0\linewidth]{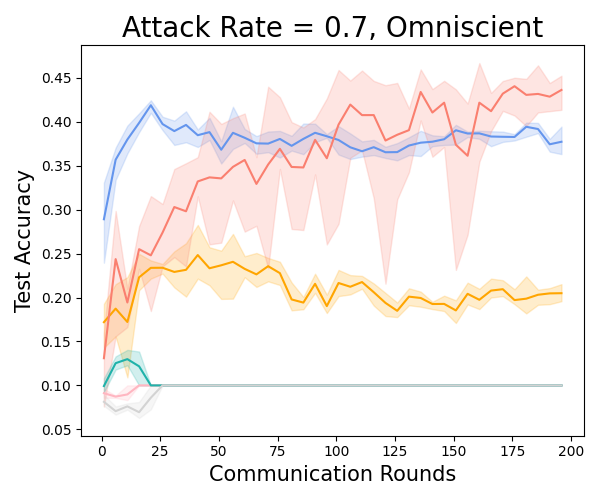} 
    \end{subfigure}
    \subcaption{CIFAR-10}
     % \vspace{-0.1cm}
    \end{subfigure}
    \caption{\textbf{Defence against attack.} Test accuracy curve for different FL methods on MNIST/CIFAR-10 with the degree of data heterogeneity $\alpha = 0.01$ and $\alpha = 0.1$, respectively, under different types of attacks (Label Flip and Omniscient Attack) and different attack rates $AR = 0.4/0.7$. SmartFL outperforms both state-of-the-art defenses using proxy data and classical byzantine-robust aggregation strategies. }
    \label{fig:exp-attack}
     % \vspace{-0.4cm}
\end{figure*}
\setlength{\belowcaptionskip}{-0.1cm}  
\begin{figure}[t!]
        \begin{subfigure}{.49\linewidth}
            \centering
        \includegraphics[width=1.0\linewidth]{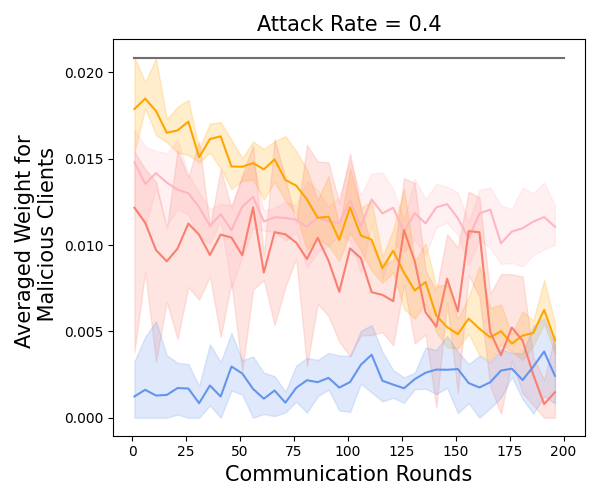}
            % \subcaption{CIFAR-10}
        \end{subfigure}
        \begin{subfigure}{.49\linewidth}
        \centering
        \includegraphics[width=1.0\linewidth]{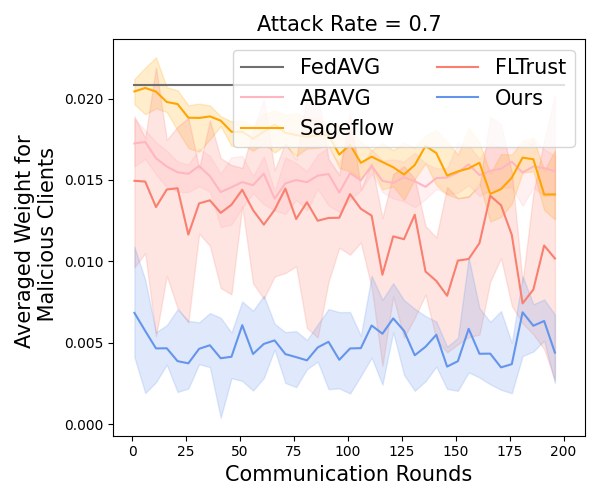}
        % \subcaption{FMNIST}
        \end{subfigure}
        % \vspace{-0.2cm}
\caption{\textbf{Curve of the average of coefficients $p$ for malicious clients} in every communication round with Label Flip Attack on CIFAR-10 with $\alpha=0.1$. SmartFL successfully learns small weights for malicious clients during the whole training process, which enables robust FL even when the data is highly non-IID, and the attack rate is large.}
% \vspace{-0.6cm}
\label{fig:visualization}
% \end{wrapfigure}
         % \vspace{-0.4cm}
    % \label{fig:exp-overfit}
\end{figure}
% \vspace{-2pt}
\subsection{Robustness against Poisoning Attacks}
% \vspace{-3pt}
\label{exp-attack}
% {\color{blue}add fltrust, SageFlow}
We demonstrate the robustness of our solution against Label Flip Attack, Omniscient Attack, and Fang Attack in various scenarios. 
% Through optimizing the aggregation coefficients, SmartFL successfully alleviates the influence of poisoned models even with highly non-i.i.d data distribution and a large portion of attackers.
% Same as the above studies, for models without proxy data, all the training data are distributed to the clients, otherwise, 128 samples are held on the server as proxy data.
% Here we include two types of baselines: the solutions designed for defending the attacks, including Median, Krum, and Trimmed Mean, as well as the solutions targeting optimizing server-side aggregation.
We experiment on CIFAR 10 with $\alpha \in \{0.1,1\}, C = 60\%$ and model ResNet-8, and MNIST with $\alpha \in \{0.01, 0.1, 1\}, C = 60\%$ and model 2-layer ConvNet, and consider attack rate $AR \in \{0.2,0.3,0.4,0.7\}$. 
\autoref{fig:exp-attack} demonstrates the server test accuracy in the federated learning process with attack rate $AR=0.4/0.7$ and high data heterogeneity. More results in all the scenarios are shown in \Appendix Sec. 4.
% \ref{supp-exp-robust}.
For Attack Rate $=0.7$, some defenses are not applicable because their assumption that less than half of the clients are malicious does not hold. 

We have the following observations. First, update-based statistical solutions generally cannot perform well when the data distribution is highly non-IID, which aligns with prior studies~\cite{DBLP:conf/iclr/KarimireddyHJ22,fltrust}, indicating the potential to leverage additional server knowledge to further improve the robustness. Second, full-space training after performing FedAVG, i.e., Finetuning, is hard to dilute the influence of poisoned models with a small amount of server data. 
Third, the state-of-the-art methods using labelled proxy data (i.e., Sageflow and FLTrust) show a relatively good performance defending against both attacks but still suffer from unstable learning and some failure cases.
% Label Flip Attack, but fails to defend against Omniscient attack. We owe it to the fact that with a Label Flip attack, the attacker models are trained to predict a totally wrong label, and therefore the weight can be adjusted to a small value according to their low validation performance. However, for the model poisoning attack, the validation performance is not necessarily low enough. 
Finally, SmartFL yields stable and good performance against various attacks in different scenarios, indicating the effectiveness of mitigating negative effects from malicious clients through subspace training. 

We further visualize \textbf{the evolution of averaged coefficients for malicious clients} of competitive attack-resistant FL methods and FedAVG (\autoref{fig:visualization}). Note that the weights are normalized for all solutions, and small weights for malicious clients result in a larger contribution of useful benign updates in the global model.  We can observe that SmartFL successfully gives small weights to malicious clients in the whole learning process, while the other solutions take effect when the model is well-trained. This accounts for the faster convergence and higher performance of SmartFL. 
% Figure~\ref{fig:exp-weight} visualizes the average weights for all the malicious clients in the whole training procedure on CIFAR-10. The result further verifies our analysis of the performance of ABAVG and our methods. 

% that our solution inherently yields better robustness against poisoning attacks, and even outperforms some state-of-the-art defense methods.

% For one-shot setting
% % \section{Discussions.}

% \section{Discussions}
% \vspace{-5pt}
\section{Conclusion and Discussions}
% \vspace{-6pt}
\label{sec-concu}
% robust to data and model heterogeneity, as well as poisoning attacks. 
% Data heterogeneity
Data heterogeneity across clients and poisoning attacks are among the main bottlenecks for robust server-side aggregation. In this work, we propose SmartFL, which optimizes the aggregation to universally overcome both challenges by subspace training. We extract a reduced subspace spanned by the clients' models to achieve effective and efficient optimization of the global model in every communication round with a small amount of proxy data. We provide theoretical analysis for SmartFL on convergence and generalization ability. Extensive experiments demonstrate the state-of-the-art performance of SmartFL for both FL with non-IID data distribution and FL with poisoning attacks.
We involve more discussions in \Appendix Sec. 5.
% \autoref{suppl-sec:dis}.
%%%%%%%%% REFERENCES
{\small
\bibliographystyle{ieee_fullname}
\bibliography{egbib}
}

\appendix

\clearpage
\onecolumn
\section*{\LARGE Supplementary Material}
\appendix
% \section*{\LARGE Supplementary Material}
\begin{itemize}
    \item \autoref{suppl-sec:related}: additional related work (cf. Sec. 2 of the main paper).
    % \autoref{related_work} 
    \item \autoref{suppl-sec-prf}: proof and additional analysis (cf. Sec. 4.2 of the main paper).
    % \autoref{ss_ana} 
    \item \autoref{suppl-sec:exp_s}: additional details of experimental setups (cf. Sec. 5.1 of the main paper).
    % \autoref{ss_exp_setup}
    \item \autoref{suppl-sec:exp_r}: additional experimental results and analysis (cf. Sec. 5.2 and 5.3 of the main paper).
    % \autoref{s_exp_noniid} 
    % \autoref{exp-attack}
    \item \autoref{suppl-sec:dis}: additional discussions (cf. 5.3 of the main paper).
    % ~\autoref{sec-concu}
\end{itemize}

\section{Additional Related Work}
\label{suppl-sec:related}
\subsection{Training in Subspace}
Several prior studies~\cite{DBLP:conf/iclr/LiFLY18,gur2018gradient,vinyals2012krylov} uncover the low-dimensionality essence in training neural networks, laying the foundation for the research on training in subspace. \cite{DBLP:conf/iclr/LiFLY18} first proposes to train networks in a smaller, randomly oriented subspace and demonstrate that the required dimension is much lower than the original dimension of parameters to obtain a relatively good performance.
Afterward, \cite{gressmann2020improving} proposes re-drawing the random subspace during training to improve the performance. Recently, \cite{li2022low} improves the random-oriented subspace by analyzing the optimization trajectory, and verifies that a carefully-extracted 40-dimensional space is enough to achieve comparable performance to regular training. The following study~\cite{li2022subspace} applies subspace training in adversarial training problems to prevent overfitting. 
In our work, we take advantage of the efficiency and generalization of subspace training to optimize server-side aggregation.
We leverage prior knowledge on aggregation for FL to construct the subspace as the convex hull spanned by client models.

\subsection{Federated Learning with Non-i.i.d. Data Distribution}
In this section, we supplement the other line of solutions discussed in the main paper for heterogeneous FL, i.e., \textbf{modifying local training and inference.}
Multiple branches of solutions are proposed to solve non-i.i.d data distribution through modifying local training and inference process. 
Several solutions propose to mitigate client drift through \textit{regularing local training}. FedPROX~\cite{li2020federated} and FedDYN~\cite{Acar2021Dyn} propose to regularize the drift of local model with global model. MOON~\cite{li2021model} introduces a contrastive loss and  SCAFFOLD~\cite{karimireddy2019scaffold} introduces control variates to correct local gradients.
\textit{Data sharing or augmentation} based solutions~\cite{shin2020xor, oh2020mix2fld,YoonSHY21,zhao2018federated} approach the problem from the data perspective and add to some shared/augmented data in local training to alleviate data heterogeneity.
\textit{Personalized FL}~\cite{kulkarni2020survey,t2020personalized,hanzely2020lower,li2021ditto} is also a branch of solutions that modify the local inference process. Instead of training a global model, these approaches seek to find the best local model, and the evaluation is performed locally.
Recently, a work~\cite{chen2021bridging} proposes to bridge generic FL and personalized FL to improve performance.
\subsection{Comparison with close FL work}

In this section, we detail the difference between SmartFL with existing FL works, which update coefficients for clients in aggregation at every communication round. For non-IID data distribution, FedPNS~\cite{wu2022node} and IDA~\cite{yeganeh2020inverse} reweight or select clients based on gradient diversity (since they focus on accelerating convergence instead of improving performance, we do not involve them in the comparison); ABAVG~\cite{xiao2021novel} updates coefficients for the clients based on accuracy on proxy data.
% FedPNS~\cite{wu2022node} proposes to check the relationship between the local and the global gradient to choose a subset of clients. Similar idea is developed in IDA~\cite{yeganeh2020inverse} to reweight the clients based on the local and the global gradient. However, these solutions only leverage gradient difference and can only accelerate convergence and fail to boost performance and handle malicious clients. FRCS ABAVG~\cite{xiao2021novel} reweight the clients with accuracy on. However,
For relevant client selection, S-FedAVG~\cite{nagalapatti2021game} selects the relevant clients based on Shapley value calculated on proxy data. For personalized FL, KT-pFL~\cite{zhang2021parameterized} maintains a knowledge coefficient matrix for personalized knowledge transfer, which is not applicable in generic FL for an optimal global model.
For attack-robust aggregation, FLTrust~\cite{fltrust} and Sageflow~\cite{sageflow} assign weights for clients based on differences with the on-server model trained on proxy data and loss on proxy data, respectively. 
Overall, since existing reweighting-based FL methods heuristically rely on specifically-designed rules or special FL paradigms, they can only handle the targeted challenge, and the performance may not be optimal.
Differently, SmartFL learns the best coefficients at every communication round for aggregating the global model, which is essentially the optimization of the global model in a reduced subspace, which jointly handles potential problems caused by both challenges in real-world FL.
% \subsection{Biased Client Participation}
% In a real-world federated learning scenarios, the client participation probability can be biased due to client availability and heterogeneous client~\cite{wang2021field,kairouz2021advances}. Mitigating such system-induced bias is still an open challenge. The prior work~\cite{bonawitz2019towards} tackles the issue with live A/B experiments to detect whether the bias leads to inferior result. \cite{eichner2019semi} use a block-cyclic structure to improve the performance of SGD. This work mitigate the problem from a new aspect that we can leverage the unbiased proxy data to guide the aggregation with biased client participation.
\clearpage
% \twocolumn[
{

\section{Proof and additional analysis}
\label{suppl-sec-prf}
\subsection{Proof of Property 1}  %\ref{thm:convergence}
We prove the property 1 with the following  definitions and assumptions, which are widely adopted in the existing related studies~\cite{xie2019zeno, xie2020zeno++,sageflow}. 
% We need the following definitions and assumptions, which are widely adopted in the existing related studies.
\begin{definition}[L-smoothness]
We say a differentiable $f(\vw)$ L-smooth if there exists $L>0$ such that $$f(\vv) -f(\vw) \leq \langle \nabla f(\vw), \vv-\vw\rangle + \frac{L}{2}\|\vv-\vw\|^2, \forall \vw, \vv.$$
\end{definition}
\begin{definition}[Polyak-Łojasiewicz (PL) Inequality\cite{polyak1964gradient}] A function $f(\vw)$ satisfies the Polyak-Łojasiewicz (PL) inequality if 
there exists a constant $\mu>0$, such that 
$$f(\vw) - f(\vw^*) \leq \frac{1}{2\mu} \|\nabla f(\vw)\|^2, \forall \vw,$$
where $\vw^*$ is the minimum of $f(\vw)$.
\end{definition}
\begin{assumption}\label{assumption-honest}
We assume in each iteration $t$, there exists at least one honest client $i_t$ among the $M$ clients, who return the local models, in a sense that 
$$\langle \nabla \mathcal{L}_s(\vw^t, \mathcal{D}_s), \Delta_{i_t}^t\rangle + \gamma \|\nabla \mathcal{L}_s(\vw^t, \mathcal{D}_s)\|^2 \leq \epsilon,$$
where $\gamma>0$ and $\epsilon>0$ are  two  constants.
\end{assumption}
\begin{remark}
    Assumption \ref{assumption-honest} is practical and it is adopted in attack-robust studies \cite{xie2019zeno, xie2020zeno++}. It means that $\mathcal{L}_s(\vw^t, \mathcal{D}_s)$ can be reduced a little by involving $\Delta_{i_t}^t$ into $\vw^t$. If it is not satisfied in some extreme round, we can skip it and wait for the next communication round. 
\end{remark}

\begin{assumption} \label{ass:gap}
Given the client models $\vw_1, \ldots, \vw_M$, we assume $|\mathcal{L}(\vp,\mathcal{D})-\mathcal{L}_s(\vp, \mathcal{D}_s)| < \delta/2$ holds for a small constant $\delta>0$.
\end{assumption}
\begin{remark}
Note that $\mathcal{L}(\vp,\mathcal{D}) = \frac{1}{\left|\mathcal{D}\right|} \sum_{\xi \in \mathcal{D}} \ell(\boldsymbol{w}, \xi)$, and $\mathcal{L}_s\left(\boldsymbol{p}, \mathcal{D}_s\right)=\frac{1}{\mid \mathcal{D}_s \mid}  \sum_{\xi \in \mathcal{D}_s} \ell(\boldsymbol{w}, \xi)$, where $\vw$ is the weighted average of the given client models with coeffecient $\vp$. As our $\vp$ has a low dimension, $\mathcal{L}(\vp,\mathcal{D})$ can be approximated by $\mathcal{L}_s(\vp, \mathcal{D}_s)$ with a small subset $\mathcal{D}_s$ when these two datasets have similar distributions. We would like to point out that we need this assumption for the convenience of proof, however, in practice, we find that our method works well even if $\mathcal{D}_s$ is sampled from a different distribution. 
% Also, as mentioned in Section~\ref{sec:method}, we assume the collected data $\mathcal{D}_s$ has similar distribution with $\mathcal{D}$.
% For theoretical proof, we assume the collected data $\mathcal{D}_s$ has similar distribution with $\mathcal{D}$.

\end{remark}

Then, we would like to rephrase Property 1 into a more formal form below:
\begin{property}\label{property:convergence}
Besides Assumptions \ref{assumption-honest} and \ref{ass:gap}, we assume the losses $\mathcal{L}(\vw,\mathcal{D})$ and $\mathcal{L}_s(\vw,\mathcal{D}_s)$ are $L$-smooth and satisfy the PL inequality (potentially non-convex ). For the true and stochastic gradients, we assume that $\|\nabla \mathcal{L}(\vw,\mathcal{D})\|^2 \leq V_1, \|\nabla \mathcal{L}_s(\vw,\mathcal{D}_s)\|^2 \leq V_1$ and $\|\nabla \mathcal{L}_s(\vw,\mathcal{D}_s)-\nabla \mathcal{L}(\vw,\mathcal{D})\|^2 \leq V_3$. Further, we assume during training process, $\|\nabla \mathcal{L}_s(\vw^t,\mathcal{D}_s)\|^2$ is always low bounded, i.e., $\|\nabla \mathcal{L}_s(\vw^t,\mathcal{D}_s)\|^2>V_2>0$. Then, for our SmartFL, we have
\begin{align*}
    \mathbb{E}\left[\mathcal{L}(\vw^{T}, \mathcal{D}) - \mathcal{L}(\vw^*, \mathcal{D})\right] \leq &(1-2\mu\gamma \frac{V_2}{V_1} )^T  \mathbb{E}\left[\mathcal{L}(\vw^{0}, \mathcal{D}) - \mathcal{L}(\vw^*, \mathcal{D}) \right] + \frac{V_1}{2\mu \gamma V_2}\left[\eta\left(\frac{1}{2}V_3 + \frac{L+1}{2} V_1 \right)+  \epsilon + \delta \right],
\end{align*}
where $\eta <\min(1,1/L)$ is a small constant.
\end{property}
}
\begin{remark}
\textup{(1)} In Property \ref{property:convergence},following \cite{xie2020zeno++}, we assume $\|\nabla \mathcal{L}_s(\vw^t,\mathcal{D}_s)\|^2>V_2>0$ always holds during the training process. In practice, if we have a zero gradient, we can randomly discard/add some sample into the current minibatch to make it nonzero. \textup{(2)} Due to the PL inequlaity, we have $\mathcal{L}(\vw^T, \vw^*)\leq V_1/2\mu$. We would like to point out that as $\mu$ is always a small number, this bound is very loose and impractical. \textup{(3)} In our bound above, $\eta, \epsilon$ and $\delta$ are all small numbers, which indicates that the error can converge to a small value. 
\end{remark}
\begin{proof} of Property \ref{property:convergence}:\\
Denote the honest client in iteration $t$ to be $i_t$ and from Assumption \ref{assumption-honest}, we have
$$\langle \nabla \mathcal{L}_s(\vw^t,\mathcal{D}_s), \Delta_{i_t}^t\rangle \leq -\gamma \|\nabla \mathcal{L}_s(\vw^t, \mathcal{D}_s)\|^2  +\epsilon.$$
Thus, we can have
\begin{align}
    &\langle \nabla \mathcal{L}(\vw^t, \mathcal{D}), \Delta_{i_t}^t\rangle \nonumber \\
    &\leq \langle \nabla \mathcal{L}(\vw^t, \mathcal{D}) - \nabla \mathcal{L}_s(\vw^t, \mathcal{D}_s),  \Delta_{i_t}^t  \rangle -\gamma \|\nabla \mathcal{L}_s(\vw^t, \mathcal{D}_s)\|^2  + \epsilon\nonumber \\
    &\leq \langle \nabla \mathcal{L}(\vw^t, \mathcal{D}) - \nabla \mathcal{L}_s(\vw^t, \mathcal{D}_s),  \Delta_{i_t}^t  \rangle - \gamma \frac{V_2}{V_1} \|\nabla \mathcal{L}(\vw^t, \mathcal{D})\|^2  + \epsilon\nonumber \\
    & \leq \frac{\eta_{i_t}}{2}\|\nabla \mathcal{L}(\vw^t, \mathcal{D}) - \nabla \mathcal{L}_s(\vw^t, \mathcal{D}_s)\|^2 + \frac{\eta_{i_t}}{2}\|\nabla \mathcal{L}_s(\vw^t, \mathcal{D}_s)\|^2 - \gamma \frac{V_2}{V_1}\|\nabla \mathcal{L}(\vw^t, \mathcal{D})\|^2 +  \epsilon \nonumber \\
    & \leq -  \gamma \frac{V_2}{V_1}\|\nabla \mathcal{L}(\vw^t, \mathcal{D}) \|^2 + \frac{\eta_{i_t}}{2} V_1 + \frac{\eta_{i_t}}{2}V_3+ \epsilon. \nonumber
\end{align}
where $\eta_{i_t} = \|\Delta_{i_t}^t\|/\|\nabla \mathcal{L}_s(\vw^t, \mathcal{D}_s)\|$, which can be controlled by tuning the learning rate and length of local training. Therefore, we can assume $\eta_{i_t}\leq \eta <\min(1,1/L)$.

Notice that in our server-side aggregation, we search the model fusion in the convex hull spanned by the received client models, which contains these models. Therefore, we have 
\begin{align}
    &\mathcal{L}_s(\vw^{t+1}, \mathcal{D}_s)\leq \mathcal{L}_s(\vw^{t}_{i_t}, \mathcal{D}_s).\nonumber
    \end{align}
From Assumption \ref{ass:gap}, we can get
    \begin{align}
        &\mathcal{L}(\vw^{t+1}, \mathcal{D}) \leq \mathcal{L}_s(\vw^{t+1}, \mathcal{D}_s) + \delta/2 \leq \mathcal{L}_s(\vw^{t}_{i_t}, \mathcal{D}_s) +\delta/2 \leq \mathcal{L}(\vw^{t}_{i_t}, \mathcal{D}) + \delta.\nonumber
\end{align}
According to the smoothness of $\mathcal{L}(\vw, \mathcal{D})$, we can get \begin{align*}
    &\mathbb{E}\left[\mathcal{L}(\vw^{t+1}, \mathcal{D}) - \mathcal{L}(\vw^*, \mathcal{D})\right] \nonumber \\
    &\leq \mathbb{E}\left[\mathcal{L}(\vw^{t}_{i_t}, \mathcal{D}) - \mathcal{L}(\vw^*, \mathcal{D})\right] + \delta\nonumber\\
    &\leq  \mathbb{E}\left[\mathcal{L}(\vw^{t}, \mathcal{D}) - \mathcal{L}(\vw^*, \mathcal{D}) + \langle \nabla \mathcal{L}(\vw^t, \mathcal{D}), \Delta_{i_t}^t\rangle + \frac{L}{2}\|\Delta_{i_t}^t\|^2\right] +  \delta \nonumber\\
    & \leq  \mathbb{E}\left[\mathcal{L}(\vw^{t}, \mathcal{D}) - \mathcal{L}(\vw^*, \mathcal{D})-  \gamma \frac{V_2}{V_1}\|\nabla \mathcal{L}(\vw^t, \mathcal{D}) \|^2 \right] + \frac{\eta}{2}V_3 + \frac{L+1}{2}\eta V_1 + \eta \epsilon + \delta\nonumber \\
    & \leq (1-2\mu\gamma \frac{V_2}{V_1} ) \mathbb{E}\left[\mathcal{L}_s(\vw^{t}, \mathcal{D}_s) - \mathcal{L}_s(\vw^*, \mathcal{D}_s) \right] + \eta\left(\frac{1}{2}V_3 + \frac{L+1}{2} V_1 \right)  +  \epsilon + \delta.\nonumber
\end{align*}
Hence, for the model after $T$ aggregations, we can have 
\begin{align*}
    \mathbb{E}\left[\mathcal{L}(\vw^{T}, \mathcal{D}) - \mathcal{L}(\vw^*, \mathcal{D})\right] \leq &(1-2\mu\gamma \frac{V_2}{V_1} )^T  \mathbb{E}\left[\mathcal{L}(\vw^{0}, \mathcal{D}) - \mathcal{L}(\vw^*, \mathcal{D}) \right] + \frac{V_1}{2\mu \gamma V_2}\left[\eta\left(\frac{1}{2}V_3 + \frac{L+1}{2} V_1 \right)+  \epsilon + \delta \right]
\end{align*}
By choosing an appropriate $\gamma$ satisfying $0<1-2\mu\gamma \frac{V_2}{V_1}<1$, the expected error can converge linearly. 
\end{proof}

\subsection{Proof of Property 2}
% \ref{thm:generalization}
\begin{property}[\textbf{Generalization in Aggregation}]\label{thm:generalization-formal}
We consider a binary classification problem with some mild conditions in \cite{ben2010theory}.  Assume $\Lambda$ contains $|\Lambda|$ discrete choices. Denote the dataset $\mathcal{D}_s^{-1}$ generated by replacing one sample
in $\mathcal{D}_s$ with another arbitrary sample. We assume there exists $\kappa>0$, such that  $|\mathcal{L}_s(\vw, \mathcal{D}_s)-\mathcal{L}_s(\vw, \mathcal{D}_s^{-1})|\leq \kappa/|\mathcal{D}_s|$ for all $\vw$. Given the received client models $\vw_m^{t}$, $m \in \mathcal{M}^t$ in round $t$, with the probability at least $1-\delta$, the server-side aggregations $\vw_{Smart}$ of  SmartFL satisfies the generalization upper bound:
 {\small
 \begin{align}
    \mathbb{E}_{\mathcal{D}} \mathcal{L}(\vw_{Smart}, \mathcal{D}) \leq \mathcal{L}_s(\vw_{Smart}, \mathcal{D}_s) +\kappa \sqrt{\frac{\ln(2|\Lambda|/\delta)}{2|\mathcal{D}_s|}}+C, \label{eqn:generalization-bound}
 \end{align}
 where $C$ comes the domain discrepancy between $\mathcal{D}$ and $\mathcal{D}_s$, i.e.,
 \begin{align}
 C=\frac{1}{2}d_{\mathcal{H}^t\Delta\mathcal{H}^t}(\tilde{\mathcal{D}}, \tilde{\mathcal{D}}_s) + \lambda,
 \end{align}
 with $\tilde{\mathcal{D}}$ and $\tilde{\mathcal{D}}_s$ being the distribution of $\mathcal{D}$ and $\mathcal{D}_s$, $d_{\mathcal{H}\Delta\mathcal{H}}$ being the domain discrepancy between two distributions, $\lambda = \min_{\vp} \mathbb{E}_{\mathcal{D}}\mathcal{L}(\vp, \mathcal{D}) + \mathcal{L}(\vp, \mathcal{D}_s)$. $\mathcal{H}^t$ the subspace in round $t$.
 }
\end{property}

\begin{lemma}[Domain Adaption\cite{ben2010theory}] Considering the distributions $\mathcal{D}_{S}$ and $\mathcal{D}_{T}$, for every $h \in \mathcal{H}$ and any $\delta\in (0,1)$, with probability at least $1-\delta$, there exists:
\begin{align}
    L_{\mathcal{D}_{T}}(h) \leq L_{\mathcal{D}_{S}}(h) + \frac{1}{2}d_{\mathcal{H} \Delta \mathcal{H}}(\mathcal{D}_S, \mathcal{D}_T) + \lambda,
\end{align}
where $\lambda = L_{\mathcal{D}_S}(h^*)+L_{\mathcal{D}_T}(h^*)$. $h^*:=\arg\min_{h\in \mathcal{H}} L_{\mathcal{D}_S}(h)+ L_{\mathcal{D}_T}(h)$. $d_{\mathcal{H} \Delta \mathcal{H}}$ measures the domain discrepancy between two distributions.
\end{lemma}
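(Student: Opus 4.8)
The plan is to reproduce the classical argument of \cite{ben2010theory}, which rests on two facts: that the expected $0$--$1$ disagreement between hypotheses behaves like a pseudometric, and that the $\mathcal{H}\Delta\mathcal{H}$-divergence controls how much any pairwise disagreement can shift between the two domains. Throughout I write $L_{\mathcal{D}}(h,h')=\Pr_{x\sim\mathcal{D}}[h(x)\neq h'(x)]$ for the expected disagreement of $h,h'$ under $\mathcal{D}$, so that the risk $L_{\mathcal{D}}(h)=L_{\mathcal{D}}(h,f_{\mathcal{D}})$ is just the disagreement of $h$ with the labeling function $f_{\mathcal{D}}$ of that domain, itself regarded as a hypothesis.

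First I would record the triangle inequality $L_{\mathcal{D}}(h_1,h_3)\le L_{\mathcal{D}}(h_1,h_2)+L_{\mathcal{D}}(h_2,h_3)$, valid for every $\mathcal{D}$ and all $h_1,h_2,h_3$; it follows by taking expectations in the pointwise bound $\mathbf{1}[h_1(x)\neq h_3(x)]\le \mathbf{1}[h_1(x)\neq h_2(x)]+\mathbf{1}[h_2(x)\neq h_3(x)]$. Second, and this is the crux, I would prove that for all $h,h'\in\mathcal{H}$,
\[
\bigl|L_{\mathcal{D}_S}(h,h')-L_{\mathcal{D}_T}(h,h')\bigr|\le \tfrac{1}{2}d_{\mathcal{H}\Delta\mathcal{H}}(\mathcal{D}_S,\mathcal{D}_T).
\]
The key observation is that the disagreement region $\{x:h(x)\neq h'(x)\}$ is precisely the positive region of the hypothesis $g=h\oplus h'\in\mathcal{H}\Delta\mathcal{H}$; substituting $L_{\mathcal{D}}(h,h')=\Pr_{\mathcal{D}}[g(x)=1]$ and taking the supremum over $g\in\mathcal{H}\Delta\mathcal{H}$ reproduces the defining expression $d_{\mathcal{H}\Delta\mathcal{H}}(\mathcal{D}_S,\mathcal{D}_T)=2\sup_{g}\bigl|\Pr_{\mathcal{D}_S}[g(x)=1]-\Pr_{\mathcal{D}_T}[g(x)=1]\bigr|$, hence the factor $\tfrac12$.

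With these in hand the result follows by chaining. For any fixed $h$ and the joint minimizer $h^*$, I would apply the triangle inequality on the target domain, then convert the resulting hypothesis--hypothesis disagreement from target to source via the divergence bound, then apply the triangle inequality once more on the source domain:
\begin{align*}
L_{\mathcal{D}_T}(h)
&\le L_{\mathcal{D}_T}(h^*)+L_{\mathcal{D}_T}(h,h^*)\\
&\le L_{\mathcal{D}_T}(h^*)+L_{\mathcal{D}_S}(h,h^*)+\tfrac{1}{2}d_{\mathcal{H}\Delta\mathcal{H}}(\mathcal{D}_S,\mathcal{D}_T)\\
&\le L_{\mathcal{D}_T}(h^*)+L_{\mathcal{D}_S}(h^*)+L_{\mathcal{D}_S}(h)+\tfrac{1}{2}d_{\mathcal{H}\Delta\mathcal{H}}(\mathcal{D}_S,\mathcal{D}_T).
\end{align*}
Collecting $L_{\mathcal{D}_T}(h^*)+L_{\mathcal{D}_S}(h^*)=\lambda$ yields exactly the stated bound. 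Note the two triangle-inequality applications legitimately use the different labeling functions $f_S$ and $f_T$, while the middle cross-domain step involves only hypothesis disagreement, so nothing here requires the two domains to share a labeling rule.

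I expect the only genuinely delicate point to concern the probabilistic wrapper ``with probability at least $1-\delta$''. With the \emph{population} divergence $d_{\mathcal{H}\Delta\mathcal{H}}(\mathcal{D}_S,\mathcal{D}_T)$ and population risks written in the statement, the chain above is in fact deterministic, so the $1-\delta$ is inherited from the finite-sample form in \cite{ben2010theory}, where $d_{\mathcal{H}\Delta\mathcal{H}}$ is replaced by its empirical estimate $\hat d_{\mathcal{H}\Delta\mathcal{H}}$ computed from the samples. Making that refinement precise is the main obstacle: one must show, via a uniform concentration/VC argument over the class $\mathcal{H}\Delta\mathcal{H}$, that $\hat d_{\mathcal{H}\Delta\mathcal{H}}$ deviates from $d_{\mathcal{H}\Delta\mathcal{H}}$ by at most an $O\!\bigl(\sqrt{(\mathrm{VC}(\mathcal{H}\Delta\mathcal{H})+\ln(1/\delta))/n}\bigr)$ term with probability $1-\delta$. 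Everything else is the routine chaining above.
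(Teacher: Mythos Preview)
Your argument is correct and is precisely the classical proof from \cite{ben2010theory}: the triangle inequality for the disagreement pseudometric, the identification of disagreement regions with elements of $\mathcal{H}\Delta\mathcal{H}$ to get the $\tfrac12 d_{\mathcal{H}\Delta\mathcal{H}}$ bound, and the three-line chain through $h^*$. Your remark about the ``with probability $1-\delta$'' qualifier is also apt: as written with population quantities the bound is deterministic, and the high-probability statement only enters in the finite-sample variant.

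For comparison with the paper: there is nothing to compare. The paper does not prove this lemma at all; it is quoted verbatim from \cite{ben2010theory} and immediately invoked as a black box in the proof of Property~\ref{thm:generalization-formal}. So you have supplied strictly more than the paper does here, and what you supplied matches the original source.
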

\begin{proof}
According to  the bounded difference inequality (Corollary 2.21 of \cite{wainwright2019high}), we can obtain:
{\begin{align}
    \mathbb{E}_{\mathcal{D}_s} \mathcal{L}_s(\vw_{Smart}, \mathcal{D}_s) \leq \mathcal{L}_s(\vw_{Smart}, \mathcal{D}_s) +\kappa \sqrt{\frac{\ln(2|\Lambda|/\delta)}{2|\mathcal{D}_s|}}.
 \end{align}}
 From the lemma above, we know that 
 \begin{align}
         \mathbb{E}_{\mathcal{D}} \mathcal{L}(\vw_{Smart}, \mathcal{D}) \leq \mathbb{E}_{\mathcal{D}_s} \mathcal{L}_s(\vw_{Smart}, \mathcal{D}_s)+ \frac{1}{2}d_{\mathcal{H}^t\Delta\mathcal{H}^t}(\tilde{\mathcal{D}}, \tilde{\mathcal{D}}_s) + \lambda.
 \end{align}
 Combining the above two inequalities, we have 
  \begin{align}
         \mathbb{E}_{\mathcal{D}} \mathcal{L}(\vw_{Smart}, \mathcal{D}) \leq \mathcal{L}_s(\vw_{Smart}, \mathcal{D}_s) +\kappa \sqrt{\frac{\ln(2|\Lambda|/\delta)}{2|\mathcal{D}_s|}}+ \frac{1}{2}d_{\mathcal{H}^t\Delta\mathcal{H}^t}(\tilde{\mathcal{D}}, \tilde{\mathcal{D}}_s) + \lambda.
 \end{align}

 \end{proof}

% \twocolumn
\section{Detailed Experiments Setups}
\label{suppl-sec:exp_s}

\subsection{Dataset}
CIFAR-10/100~\cite{krizhevsky2009learning} contain 50K training and 10K testing images for 10/100 class. MNIST~\cite{deng2012mnist} includes 60K training and 10K testing samples of written digits. FMNIST~\cite{xiao2017/online} includes 60K training and 10K testing samples of Zalando's article images.
The 20newsgroups~\cite{lang1995newsweeder} text dataset comprises around 20K news documents belonging to 20 categories, and it is split into 18K documents for training and 2000 documents for testing.

\subsection{Test Setting}
% We calculate the converged result with the average of last five communication round.
We give the results over three times of experiments and report mean $\pm$ standard deviation.

\subsection{Attack}
As mentioned in the main paper, we consider three kinds of attacks, including Label Flip Attack~\cite{fung2018mitigating}, Omniscient Attack~\cite{blanchard2017machine}, and Fang Attack~\cite{fang2020local}, which involves the data poisoning attack and model poisoning attack for FL. 
Specifically, Label Flip Attack switches the label to be the next class of the ground truth, while Omniscient Attack negates the original benign gradients. For Fang Attack, we adopt Median Attack as a representative attack considering byzantine-robust aggregation.
\subsection{Baselines}
% \subsubsection{Discriptions}
\begin{itemize}
    \item FedAVG~\cite{mcmahan2017communication}: The standard communication-efficient aggregation strategy for federated learning.
    \item FedPROX~\cite{li2020federated}: An advanced method for heterogeneous federated learning technique that regularizes the drift of local model with the global model.
    \item Scaffold~\cite{karimireddy2019scaffold}: An advanced method for heterogeneous federated learning technique that introduces control variates to current local gradients.
    \item FedDF~\cite{lin2020ensembleFedDF}: An advanced aggregation strategy for heterogeneous federated learning using knowledge distillation with \textbf{unlabelled proxy data}.
    \item FedBE~\cite{chen2020fedbe}: An advanced aggregation strategy for heterogeneous federated learning using bayesian ensemble-based knowledge distillation with \textbf{unlabelled proxy data}.
    \item ABAVG~\cite{xiao2021novel}: An advanced aggregation strategy for heterogeneous federated learning using validation accuracy to reweight the clients with \textbf{labelled proxy data}.
    \item Finetuning: An advanced aggregation strategy for heterogeneous federated learning using \textbf{labelled proxy data} to finetune the aggregated model in every communication round, mentioned in \cite{chen2020fedbe}.
    \item Median~\cite{yin2018byzantine}: A Byzantine-robust aggregation strategy that calculates dimension-wise median for client updates.
    \item Krum~\cite{blanchard2017machine}: A Byzantine-robust aggregation strategy that vector-wisely selects an update.
    \item Trimmed Mean~\cite{yin2018byzantine}: A Byzantine-robust aggregation strategy that dimension-wisely removes a certain portion of the largest and smallest updates and calculates the mean of remaining values.
    \item Sageflow~\cite{sageflow}: A state-of-the-art attack-resistant aggregation strategy that combines entropy-based filtering and loss-based reweighting with \textbf{labelled proxy data.}
    \item FLTrust~\cite{fltrust}: A state-of-the-art attack-resistant aggregation strategy that maintains a server model, trains the server model with \textbf{labelled proxy data}, and reweights the client updates with the server update.
\end{itemize}
\subsection{Detailed Hyperparameter Setting}
\textbf{Baseline.} Generally, we follow the settings of the original papers without otherwise mentioning them. 
For the local training of FedPROX, we always tune the parameter according to the suggestion of the original paper to obtain the best performance for various conditions. 
For baseline models involving on-sever optimization with unlabelled/labelled data, the learning rate $\eta_s$ is tuned from $[5e-5, 1e-2]$, and the epochs is tuned from $E_s = \{1, 5, 10, 20\}$. Same as ours, the batch size is 32, and Adam Optimizer is used for on-server optimization. 
% It is worth mentioning that the full-space training methods, i.e., Finetuning, FedDF, and FedBE, generally have the best performance with small epoch numbers 1, 5, and 5, respectively. 
% This phenomenon aligns with our analysis for overfitting with a few proxy samples using full-space training.
For FedBE, the sampling number for models is set to 10, according to the original paper.
% For Sageflow, we grid search $\delta \in \{1,1.5,2,3\}, E_{th} \in \{1, 1.5, 2\}$ for various conditions.

\textbf{SmartFL \& SmartFL-U.} The default setting is mentioned in Section 5.1. 
% Since the optimal balance parameter $\lambda$ depends on multiple factors (e.g., participation rates, and data heterogeneity), we always tune $\lambda \in \{1, 5, 15\}$ for the experiment for non-IID data distribution (\autoref{tab:overview_1}).
% It is worth mentioning that our method is not highly sensitive to $\lambda$ and the optimization without regularization can already get a significant performance gain.
% Notice that for the in-depth analysis (\autoref{fig:val_size},\autoref{fig:exp-overfit},\autoref{fig:imbalance}), we do not include the regularization term, namely, set $\lambda = 0$ to fairly compare with full-space training technique. 
% For the experiments involving poisoning attacks(\autoref{fig:exp-attack},\autoref{fig:exp-attack-mnist-more},\autoref{fig:attack-cifar10-more}), the regularization term is also not involved.  Also, 
We enlarge the server training epoch $E_s$ to be 50 for the experiment with attacks since the server-side optimization requires more steps to converge under poisoning attacks.

\section{Additional experiments}
\label{suppl-sec:exp_r}
\subsection{Robustness against data heterogeneity}
In this section, we include additional experiments on robustness against data heterogeneity, including \textbf{convergence speed} and the \textbf{extension to the NLP task}.

\subsubsection{Convergence Speed}
\label{supp-conve}
\begin{table*}[h]
    \centering
        \caption{Comparison of \textbf{the number of communication rounds} to reach target accuracy. We evaluate different FL methods with ResNet-8 on CIFAR-10 with different degrees of data heterogeneity $\alpha$ 
 and participation rates $C=0.4$. $^*$Methods assume the availability of unlabelled proxy data. $^\dagger$Methods assume the availability of labelled proxy data.}
    \label{tab:overview_con}
    \resizebox{\textwidth}{!}{%
\begin{tabular}{llllll}
\toprule
 % &        \multicolumn{2}{c}{$\alpha=0.16, \text{Target} = 68.00 $} &  & \multicolumn{2}{c}{$\alpha=0.64$} & & \multicolumn{2}{c}{$\alpha=1.28$}\\
 % \cline{2-3} \cline{4-6} \cline{7-9}
 & $\alpha = 0.01$&$\alpha = 0.04$& $\alpha = 0.16$&$\alpha = 0.32$&$\alpha = 0.64$ \\
Method& $target=0.35$&$target=0.57$& $target=0.68$&$target=0.72$&$target=0.735$ \\
\midrule
FedAVG     &   196.3$\pm$35.9 & 136.7$\pm$37.5 &  150.3$\pm$18.1      & 165.0$\pm$21.0 &  151.7$\pm$37.0\\
FedPROX    &   
  101.0$\pm$5.0 & 133.7$\pm$36.0 &  157.7$\pm$15.0      &    150.0$\pm$20.7 &  \underline{111.0$\pm$26.5}\\
  Scaffold    &   
  137.7$\pm$10.6 &   125.0$\pm$11.4      & 128.0$\pm$38.5 & 135.0$\pm$18.7 &  113.0$\pm$25.6\\
\midrule
FedDF$^*$  &   
  160.0$\pm$15.1 & 127.0$\pm$25.5 &  168.7$\pm$38.1     & 164.3$\pm$10.0 &  162.3$\pm$39.6\\
FedBE$^*$  &     
 182.3$\pm$15.9 & 132.0$\pm$31.2 &  143.7$\pm$32.9    & 177.0$\pm$10.0 &  146.7$\pm$49.2\\
\textbf{SmartFL-U$^*$}  &   135.3$\pm$22.0 & 117.7$\pm$11.2 &  \underline{91.0$\pm$11.1}       &153.0$\pm$33.0 &  124.0$\pm$46.5\\
\midrule
ABAVG$^\dagger$   & 
176.0$\pm$39.8 & 165.0$\pm$25.4 &  115.7$\pm$26.1      & \underline{149.3$\pm$15.0} &  129.7$\pm$14.6\\
Finetuning$^\dagger$    &   \underline{72.3$\pm$6.0} & \underline{96.0$\pm$12.2} &  97.3$\pm$3.8       &197.0$\pm$25.1 &  177.3$\pm$24.6\\
\textbf{SmartFL}$^\dagger$   &   \textbf{34.7$\pm$6.1} & \textbf{48.3$\pm$2.1} &  \textbf{58.3$\pm$1.5}       &\textbf{121.3$\pm$22.1} &  \textbf{98.0$\pm$17.6}\\
\bottomrule
\end{tabular}}
\end{table*}
% When the distribution of data is higjly non-i.i.d. across clients, the convergence speed of standard aggregation is in
Highly non-i.i.d. distribution of data also severely influences the convergence speed of standard aggregation strategies. \autoref{tab:overview_con} shows the number of communication rounds for the different methods to reach the target accuracy with ResNet-8 on CIFAR-10. Advanced aggregation strategies for heterogenous FL also accelerate convergence compared with FedAVG. SmartFL always requires much fewer communication rounds to achieve target performance in all conditions, indicating the efficiency and effectiveness of optimizing the aggregation via subspace training.

\subsubsection{Extension to NLP task}
\label{supp-nlp}
\begin{table}
	\centering
	% \footnotesize
	% \vspace{-0.5cm}
	\caption{Comparison of maximum test accuracy achieved by different methods with Logistic Regression on  20newsgroup with $C=40\%$.}
	% \vspace{-0.25cm}
	% \tabcolsep 2pt
	% \renewcommand{\arraystretch}{0.5}
	\begin{tabular}{l|ccc}
	\toprule
    Methods &  $\alpha = 0.01$  & $\alpha=0.04$ & $\alpha=0.16$ \\
    \midrule

    FedAVG & 30.64$\pm$3.2 & 38.58$\pm$2.3 & 59.76$\pm$1.9\\
    \midrule
    FedDF$^*$ & 36.10$\pm$2.6 & 38.87$\pm$3.1 & 59.90$\pm$1.5\\
    SmartFL-U$^*$ & \underline{39.53$\pm$1.9} & \underline{43.10$\pm$1.8} & \underline{60.32$\pm$1.0}\\
    \midrule
    Finetune$^\dagger$ & 37.10$\pm$3.5 & 37.22$\pm$2.5 & 59.93$\pm$1.1\\
    SmartFL$^\dagger$ & \textbf{44.51$\pm$1.1} & \textbf{47.33$\pm$1.3} & \textbf{60.77$\pm$0.7} \\
    \bottomrule
	\end{tabular}
	\label{tbl:nlp}
	% \vspace{-0.25cm}
\end{table}
 To verify the effectiveness of our method beyond the computer vision domain, we also evaluate our method using logistic regression on 20newsgroup~\cite{lang1995newsweeder}, a popular NLP benchmark for news classification.  As shown in \autoref{tbl:nlp}, SmartFL and SmartFL-U outperform the full-space training counterpart and FedAVG by a large margin across different $\alpha$ with both labelled and unlabelled proxy data.

\subsection{Robustness against attacks}
\label{supp-exp-robust}
% \begin{figure}[]

%     \begin{subfigure}{0.48\linewidth}
%         \centering
%         \includegraphics[width=1.0\linewidth]{}
%     \end{subfigure}
%         \begin{subfigure}{0.48\linewidth}
%         \centering
%         \includegraphics[width=1.0\linewidth]{}

%     \end{subfigure}
%     % \begin{subfigure}{0.24\linewidth}
%     %     \centering
%     %     \includegraphics[width=1.0\linewidth]{Figures/weight_mali_lf.png}
%     % \end{subfigure}
%     %     \begin{subfigure}{0.24\linewidth}
%     %     \centering
%     %     \includegraphics[width=1.0\linewidth]{Figures/weight_mali_omi.png}

%     % \end{subfigure}

%     \caption{Comparison of averaged weights for malicious clients in all the communication rounds. Our method always gives the malicious clients low weights in all the scenarios, therefore successfully eliminating the influence of poisoned local model updates during collaborative training.}
%     \label{fig:exp-weight}
% \end{figure}

This section includes more results and comprehensive analysis under different scenarios for the  MNIST and CIFAR-10 datasets in the setting mentioned in Section 5.3.
 \autoref{fig:exp-attack-mnist-more} and \autoref{fig:attack-cifar10-more} show a comparison of various aggregation strategies on MNIST and CIFAR-10 with high and normal data heterogeneity under Label Flip and Omniscient Attack. We also study Fang attack, which consider the robust aggregation in \autoref{fig:attack-fang}. We have the following observation classified by the methods:

First, statistical filtering-based Byzantine-robust methods such as Krum, Trimmed Mean, and Median can successfully defend against attacks in most cases when the attack rate is small and non-i.i.d. degree is not high, which is in line with the prior studies~\cite{yin2018byzantine,blanchard2017machine} However, they are not applicable when the attack rate get higher than half. Also, their performance is largely degraded when the data distribution is highly non-IID.

Second, the full-space training counterpart (i.e., Finetuning) performs relatively well among the methods on MNIST when the attack is not high but worse on CIFAR-10. This is because, for the simpler dataset, even overfitting on proxy data can to some extent help robust aggregation, while it does not work for the harder dataset. The results verify our intuition that finetuning massive parameters on a small amount of data can not dilute the negative effect brought by malicious clients.
% when the non-i.i.d. degree is not high, most existing defense methods, including statistical filtering methods such as Krum, Trimmed Mean, and Median, can successfully defend against attacks in most cases when the attack rate is small, which is in line with the prior studies~\cite{yin2018byzantine,blanchard2017machine}. However, when the attack rate becomes larger 

Third, the methods leveraging server proxy data get the most competitive performance among all the solutions, suggesting the potential to improve the robustness of the server-side aggregation against attacks with reasonable server knowledge.
\begin{itemize}
    \item ABAVG~\cite{xiao2021novel}, which uses the validation accuracy on proxy data to reweight the clients, performs relatively well in defending against Label Flip Attack but fails to defend against modeling poisoning attacks. This is because, with a Label Flip attack, the attacker models are trained to predict a wrong label, and therefore the weight can be adjusted to a small value according to their low validation performance. However, for the model poisoning attacks, the validation performance is not necessarily low enough.
    \item Sageflow~\cite{sageflow}, which combines entropy-based filtering and loss-based reweighting, can get competitive performance under both types of attacks when the attack rate is not high and the distribution is not highly non-IID. However, it still fails in other conditions, especially with the Omniscient attack in that when the distribution is highly non-IID, the entropy of benign and malicious clients is not well separated.
    \item  FLTrust~\cite{fltrust} is the most competitive baseline that maintains a server model with proxy data and reweights the client updates according to the similarity with server model updates. We can observe that such a strategy enables robustness against attacks in almost all scenarios, especially model poisoning attacks, in that it can successfully capture and exclude the updates in an inverse direction of the server model. However, we still observe the instability of such a method during training since the stochastic gradient of the server model can not stably ensure ``good'' aggregation in all communication rounds. This can be a severe problem and sometimes leads to failure, as shown in \autoref{fail-fltrust}.
\end{itemize}

Finally, different from the above solutions that heuristically leverage server proxy data, we aggregate a global model with optimized combination coefficients for client models with proxy data in every communication round and stably mitigate the negative effects brought by malicious clients. 
% We further visualize the average normalized weights for all the malicious clients in all the communication rounds for the methods based on proxy data and FedAVG. As shown in \autoref{fig:exp-weight}, SmartFL always optimizes the weights for malicious clients to a small value compared with other heuristic reweighting strategies, which accounts for superior and stable performance of SmartFL.

\begin{figure}[h]
    \begin{subfigure}{0.49\linewidth}
        \centering
        \begin{subfigure}{0.49\linewidth}
        \includegraphics[width=1.0\linewidth]{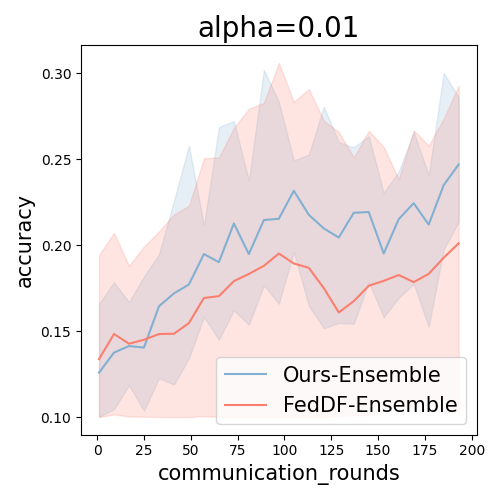}
        \end{subfigure}
        \begin{subfigure}{0.49\linewidth}
        \includegraphics[width=1.0\linewidth]{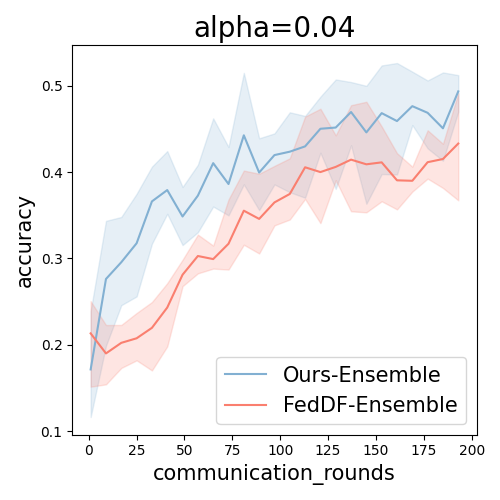}
        \end{subfigure}
        \subcaption{CIFAR-10}
    \end{subfigure}
    \begin{subfigure}{0.49\linewidth}
        \centering
        \begin{subfigure}{0.49\linewidth}
        \includegraphics[width=1.0\linewidth]{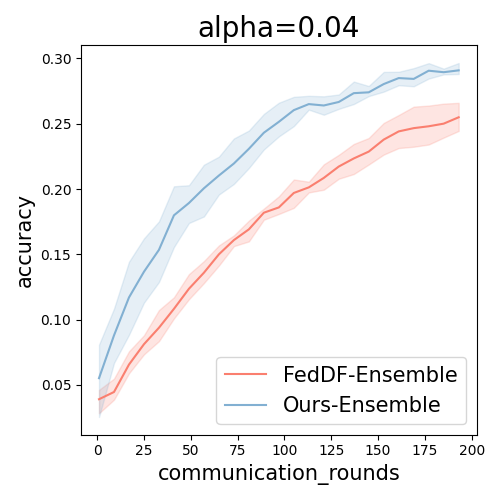}
        \end{subfigure}
        \begin{subfigure}{0.49\linewidth}
        \includegraphics[width=1.0\linewidth]{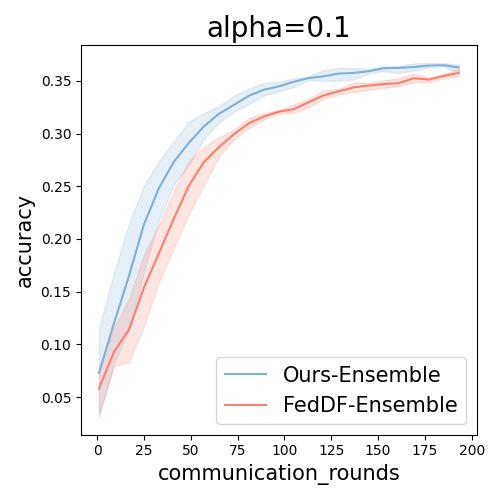}
        \end{subfigure}
        \subcaption{CIFAR-100}
    \end{subfigure}
    \caption{Studies on Heterogeneous Model Architectures (ResNet-8, MobileNet, and ShuffleNet). We compare our method with FedDF with unlabelled proxy data on CIFAR-10/100. We show the test accuracy of server ensemble model in every communication round.}
    \label{fig:hetero}
\end{figure}
\begin{figure*}[t]
    \centering
\includegraphics[width=0.95\linewidth]{Figures/aaaaaa.pdf}
    \begin{subfigure}{1.\linewidth}
    \begin{subfigure}{0.24\linewidth}
        \centering
        \includegraphics[width=1.0\linewidth]{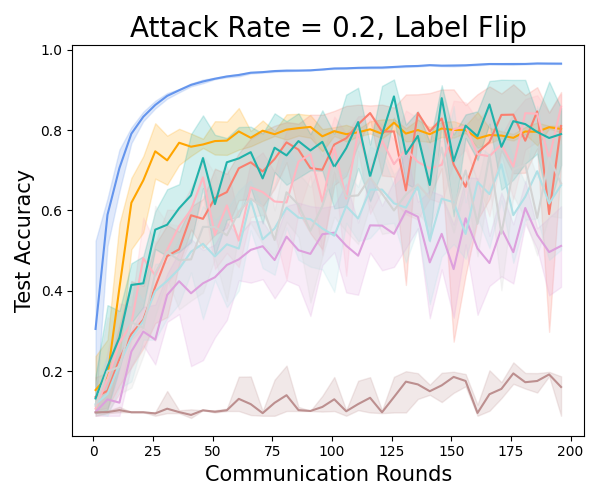}
    \end{subfigure}
        \begin{subfigure}{0.24\linewidth}
        \centering
        \includegraphics[width=1.0\linewidth]{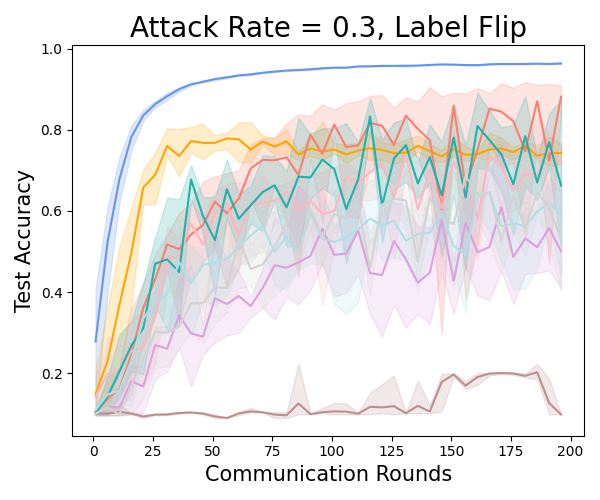}

    \end{subfigure}
    \begin{subfigure}{0.24\linewidth}
        \centering
        \includegraphics[width=1.0\linewidth]{Figures/mnist_LF_a0.01_r_0.4.png}
    \end{subfigure}
    \begin{subfigure}{0.24\linewidth}
        \centering
        \includegraphics[width=1.0\linewidth]{Figures/mnist_LF_a0.01_r_0.7.png} 
    \end{subfigure}
    \subcaption{MNIST, $\alpha=0.01$, Label Flip}

    \end{subfigure}
    \begin{subfigure}{1.\linewidth}
        \begin{subfigure}{0.24\linewidth}
        \centering
        \includegraphics[width=1.0\linewidth]{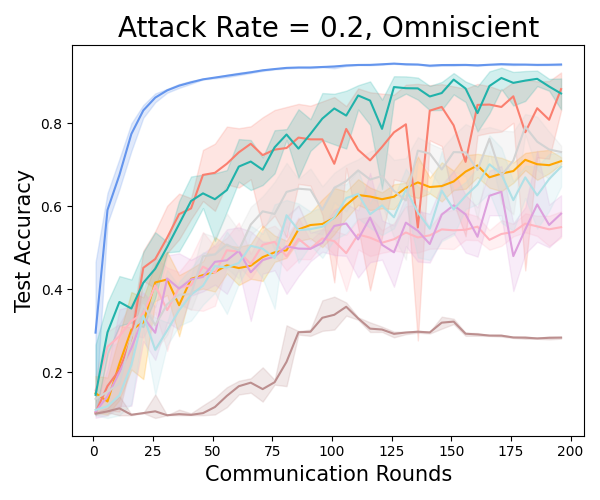}
        \end{subfigure}
        \begin{subfigure}{0.24\linewidth}
        \centering
        \includegraphics[width=1.0\linewidth]{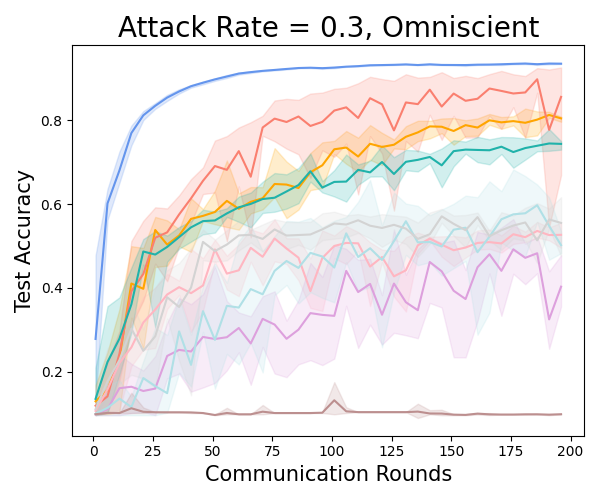}

    \end{subfigure}
    \begin{subfigure}{0.24\linewidth}
        \centering
        \includegraphics[width=1.0\linewidth]{Figures/mnist_Omi_a0.01_r_0.4.png}
        \end{subfigure}
    \begin{subfigure}{0.24\linewidth}
        \centering
        \includegraphics[width=1.0\linewidth]{Figures/mnist_Omi_a0.01_r_0.7.png} 
    \end{subfigure}
    \subcaption{MNIST, $\alpha=0.01$, Omniscient}
    \end{subfigure}        
        \begin{subfigure}{1.\linewidth}
    \begin{subfigure}{0.24\linewidth}
        \centering
        \includegraphics[width=1.0\linewidth]{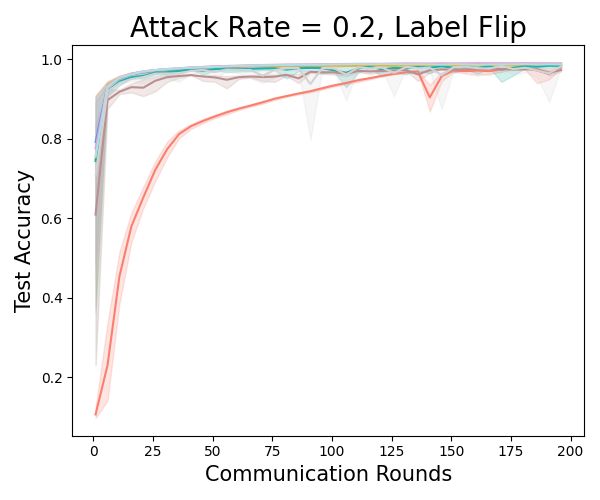}
    \end{subfigure}
        \begin{subfigure}{0.24\linewidth}
        \centering
        \includegraphics[width=1.0\linewidth]{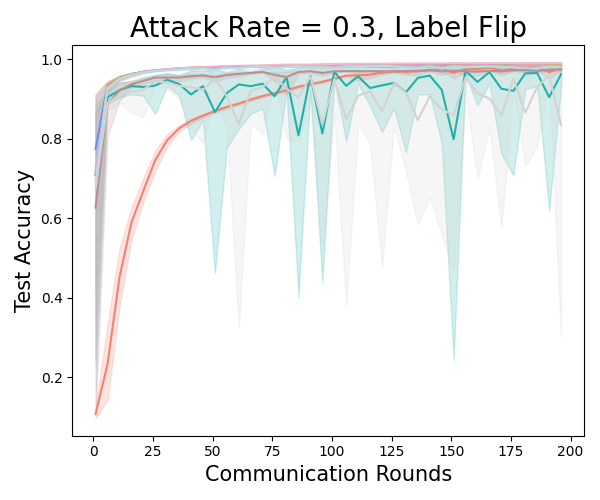}

    \end{subfigure}
    \begin{subfigure}{0.24\linewidth}
        \centering
        \includegraphics[width=1.0\linewidth]{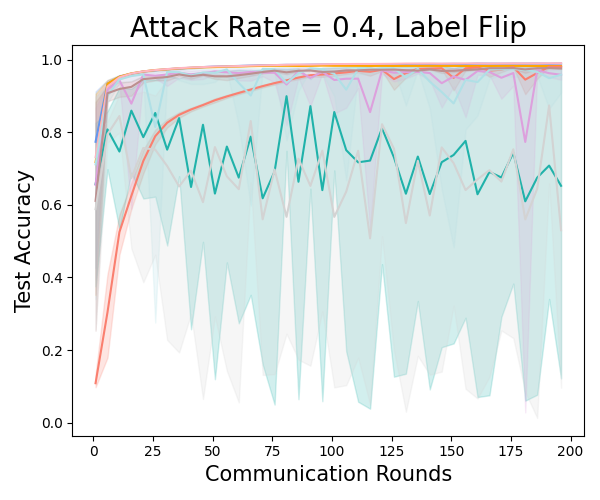}
    \end{subfigure}
    \begin{subfigure}{0.24\linewidth}
        \centering
        \includegraphics[width=1.0\linewidth]{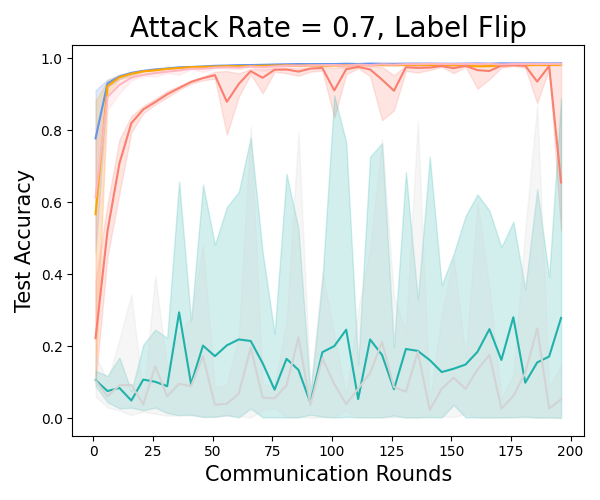} 
    \end{subfigure}
    \subcaption{MNIST, $\alpha=1$, Label Flip}
    \end{subfigure}
    
    \begin{subfigure}{1.\linewidth}
    \begin{subfigure}{0.24\linewidth}
        \centering
        \includegraphics[width=1.0\linewidth]{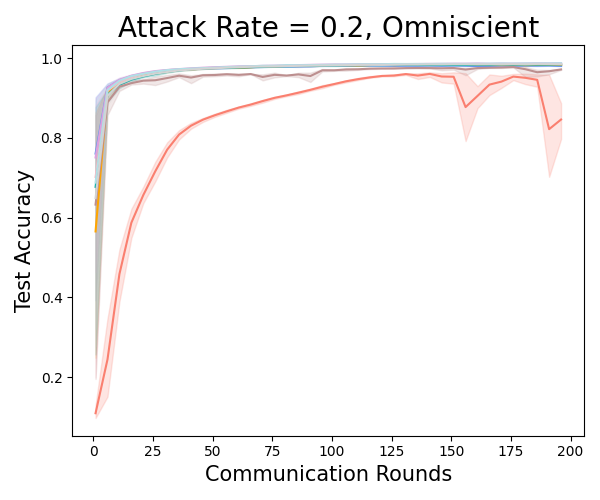}
    \end{subfigure}
        \begin{subfigure}{0.24\linewidth}
        \centering
        \includegraphics[width=1.0\linewidth]{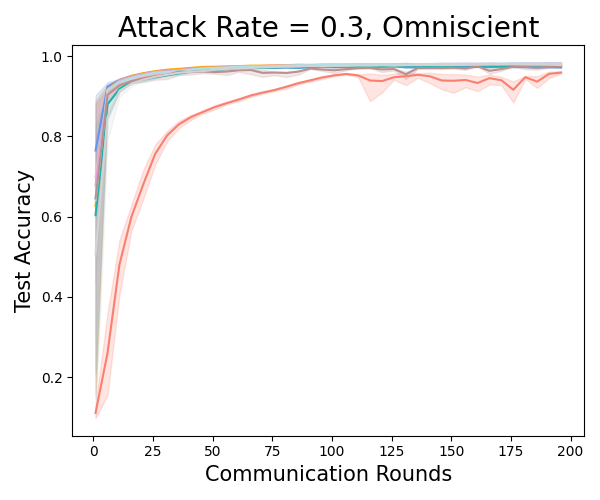}

    \end{subfigure}
    \begin{subfigure}{0.24\linewidth}
        \centering
        \includegraphics[width=1.0\linewidth]{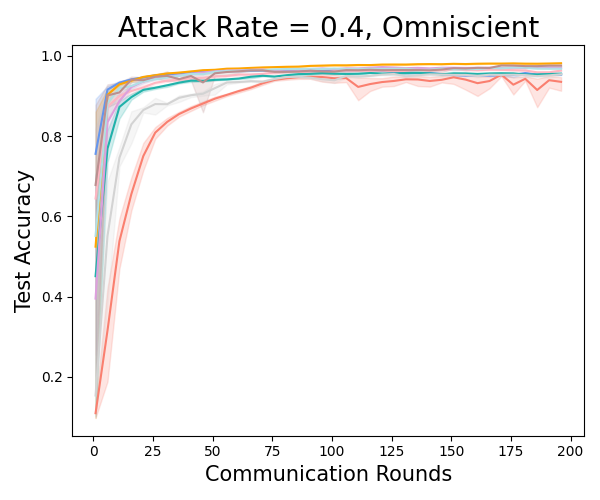}
    \end{subfigure}
    \begin{subfigure}{0.24\linewidth}
        \centering
        \includegraphics[width=1.0\linewidth]{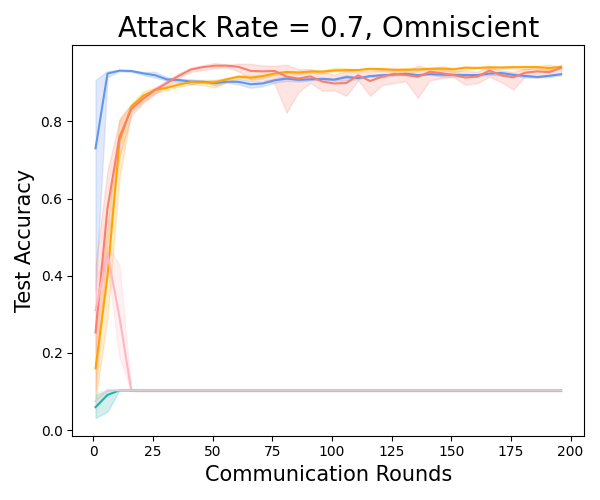} 
    \end{subfigure}
    \subcaption{MNIST, $\alpha=1$, Omniscient}
    \end{subfigure}
    \caption{Defence against Attacks on MNIST with the degree of data heterogeneity $\alpha = 0.01$ and $\alpha = 1$, under different types of attacks (Label Flip and Omniscient Attack) and different attack rates $AR \in \{0.2, 0.3,0.4,0.7\}$.}
    \label{fig:exp-attack-mnist-more}
\end{figure*}

\begin{figure*}[t]
    \centering
\includegraphics[width=0.95\linewidth]{Figures/aaaaaa.pdf}
    \begin{subfigure}{1.\linewidth}
    \begin{subfigure}{0.24\linewidth}
        \centering
        \includegraphics[width=1.0\linewidth]{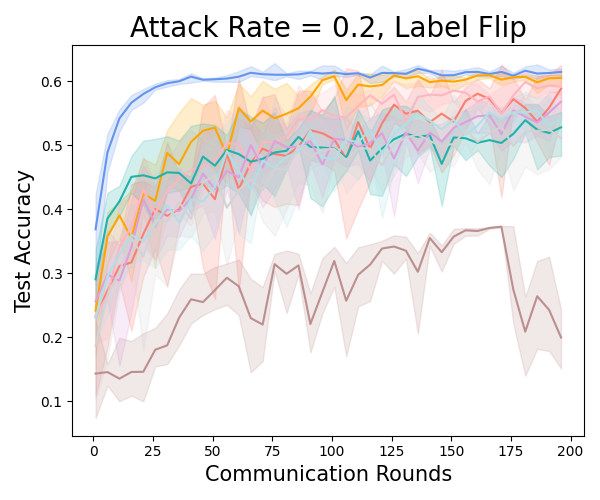}
    \end{subfigure}
        \begin{subfigure}{0.24\linewidth}
        \centering
        \includegraphics[width=1.0\linewidth]{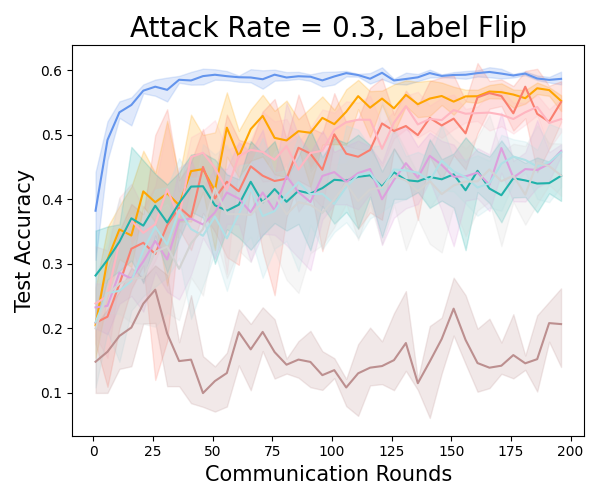}

    \end{subfigure}
    \begin{subfigure}{0.24\linewidth}
        \centering
        \includegraphics[width=1.0\linewidth]{Figures/cifar10_LF_a0.1_r_0.4.png}
    \end{subfigure}
    \begin{subfigure}{0.24\linewidth}
        \centering
        \includegraphics[width=1.0\linewidth]{Figures/cifar10_LF_a0.1_r_0.7.png} 
    \end{subfigure}
    \subcaption{CIFAR-10, $\alpha=0.1$, Label Flip}

    \end{subfigure}
    \begin{subfigure}{1.\linewidth}
        \begin{subfigure}{0.24\linewidth}
        \centering
        \includegraphics[width=1.0\linewidth]{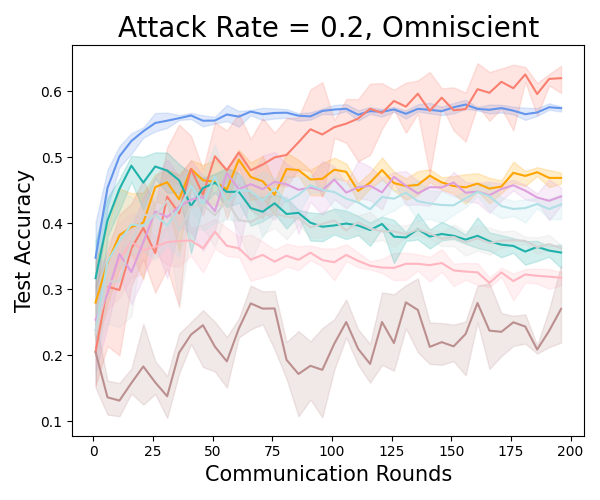}
        \end{subfigure}
        \begin{subfigure}{0.24\linewidth}
        \centering
        \includegraphics[width=1.0\linewidth]{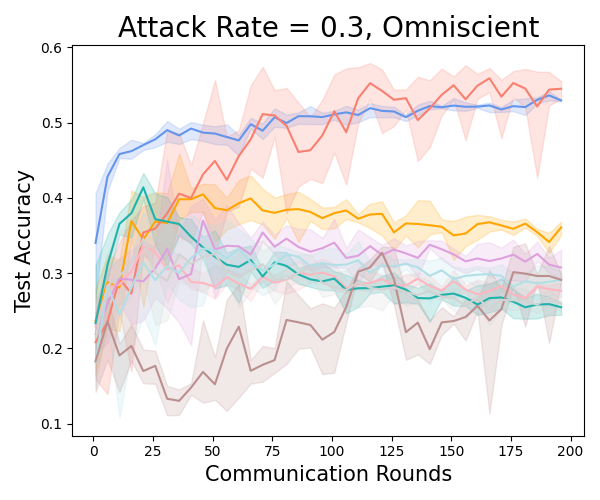}

    \end{subfigure}
    \begin{subfigure}{0.24\linewidth}
        \centering
        \includegraphics[width=1.0\linewidth]{Figures/cifar10_Omi_a0.1_r_0.4.png}
        \end{subfigure}
    \begin{subfigure}{0.24\linewidth}
        \centering
        \includegraphics[width=1.0\linewidth]{Figures/cifar10_Omi_a0.1_r_0.7.png} 
    \end{subfigure}
    \subcaption{CIFAR-10, $\alpha=0.1$, Omniscient}
    \end{subfigure}        
    \begin{subfigure}{1.\linewidth}
    \begin{subfigure}{0.24\linewidth}
        \centering
        \includegraphics[width=1.0\linewidth]{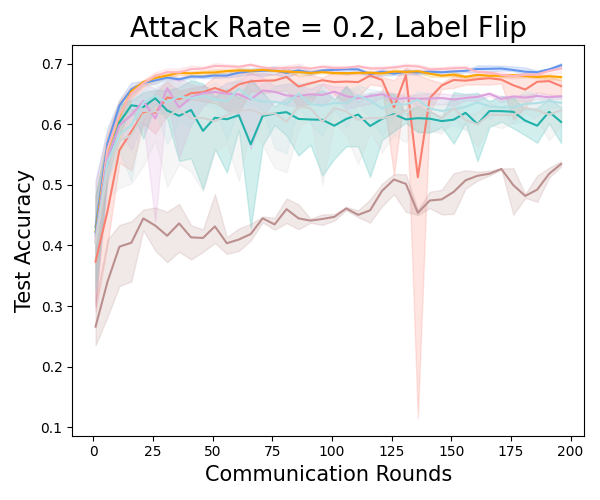}
    \end{subfigure}
        \begin{subfigure}{0.24\linewidth}
        \centering
        \includegraphics[width=1.0\linewidth]{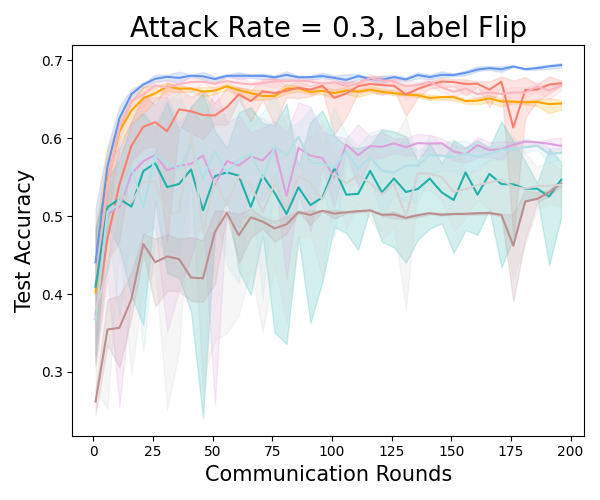}

    \end{subfigure}
    \begin{subfigure}{0.24\linewidth}
        \centering
        \includegraphics[width=1.0\linewidth]{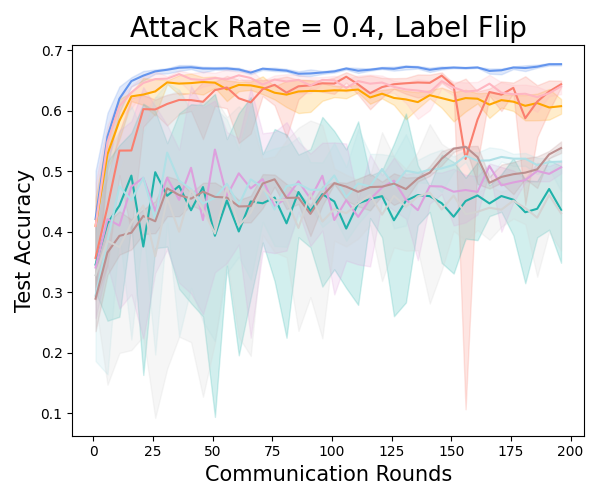}
    \end{subfigure}
    \begin{subfigure}{0.24\linewidth}
        \centering
        \includegraphics[width=1.0\linewidth]{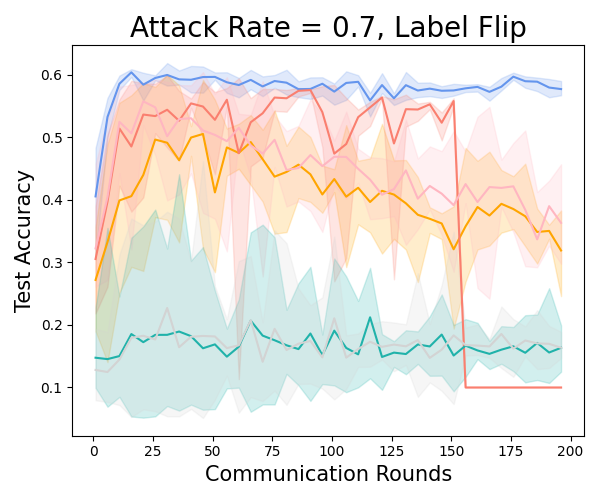} 
    \end{subfigure}
    
    \subcaption{CIFAR-10, $\alpha=1$, Label Flip}
    \label{fail-fltrust}
    \end{subfigure}
    \begin{subfigure}{1.\linewidth}
        \begin{subfigure}{0.24\linewidth}
        \centering
        \includegraphics[width=1.0\linewidth]{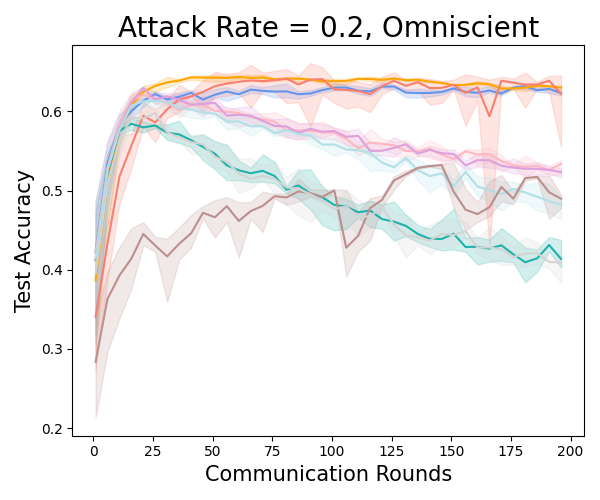}
        \end{subfigure}
        \begin{subfigure}{0.24\linewidth}
        \centering
        \includegraphics[width=1.0\linewidth]{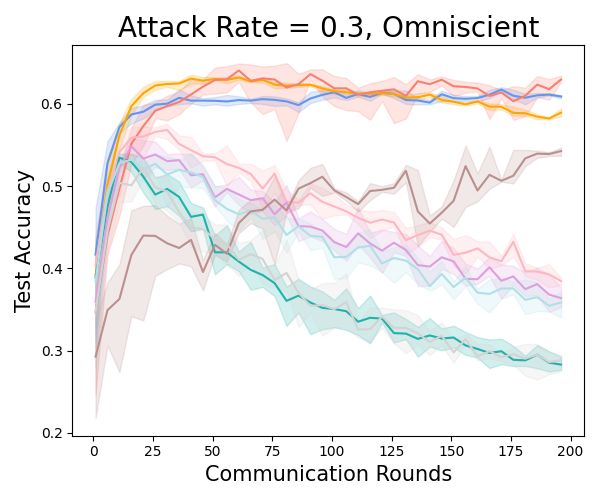}

    \end{subfigure}
    \begin{subfigure}{0.24\linewidth}
        \centering
        \includegraphics[width=1.0\linewidth]{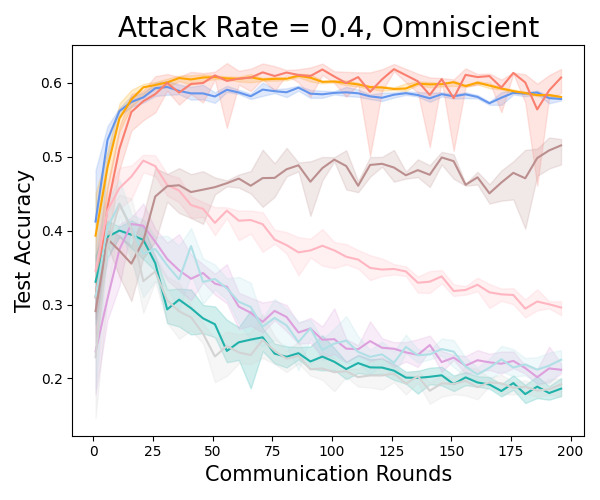}
        \end{subfigure}
    \begin{subfigure}{0.24\linewidth}
        \centering
        \includegraphics[width=1.0\linewidth]{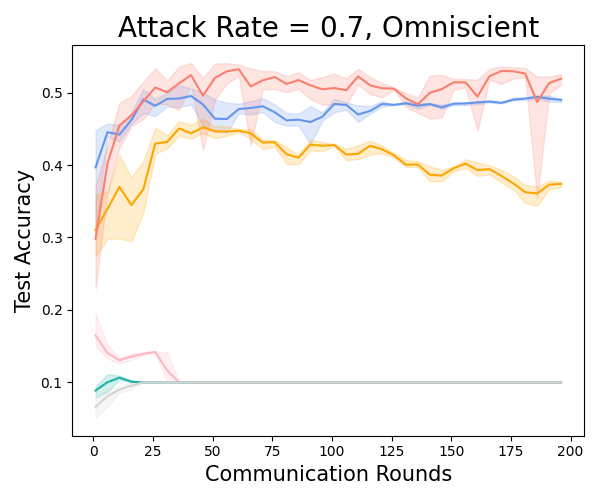} 
    \end{subfigure}
    \subcaption{CIFAR-10, $\alpha=1$, Omniscient}
    \end{subfigure}     
    %     \begin{subfigure}{1.\linewidth}
    % \begin{subfigure}{0.24\linewidth}
    %     \centering
    %     \includegraphics[width=1.0\linewidth]{}
    % \end{subfigure}
    %     \begin{subfigure}{0.24\linewidth}
    %     \centering
    %     \includegraphics[width=1.0\linewidth]{}

    % \end{subfigure}
    % \begin{subfigure}{0.24\linewidth}
    %     \centering
    %     \includegraphics[width=1.0\linewidth]{}
    % \end{subfigure}
    % \begin{subfigure}{0.24\linewidth}
    %     \centering
    %     \includegraphics[width=1.0\linewidth]{} 
    % \end{subfigure}
    % \subcaption{CIFAR-10, $\alpha=10$, Label Flip}
    % \end{subfigure}
    
    % \begin{subfigure}{1.\linewidth}
    % \begin{subfigure}{0.24\linewidth}
    %     \centering
    %     \includegraphics[width=1.0\linewidth]{}
    % \end{subfigure}
    %     \begin{subfigure}{0.24\linewidth}
    %     \centering
    %     \includegraphics[width=1.0\linewidth]{}

    % \end{subfigure}
    % \begin{subfigure}{0.24\linewidth}
    %     \centering
    %     \includegraphics[width=1.0\linewidth]{}
    % \end{subfigure}
    % \begin{subfigure}{0.24\linewidth}
    %     \centering
    %     \includegraphics[width=1.0\linewidth]{} 
    % \end{subfigure}
    % \subcaption{CIFAR-10, $\alpha=10$, Omniscient}
    % \end{subfigure}
    \caption{Defence against Attacks on CIFAR-10 with the degree of data heterogeneity $\alpha = 0.1$ and $\alpha = 1$, under different types of attacks (Label Flip and Omniscient Attack) and different attack rates $AR \in \{0.2, 0.3,0.4,0.7\}$. }
    \label{fig:attack-cifar10-more}
\end{figure*}

\begin{figure*}[t]
    \centering
\includegraphics[width=0.95\linewidth]{Figures/aaaaaa.pdf}
    \begin{subfigure}{1.\linewidth}
    \begin{subfigure}{0.24\linewidth}
        \centering
        \includegraphics[width=1.0\linewidth]{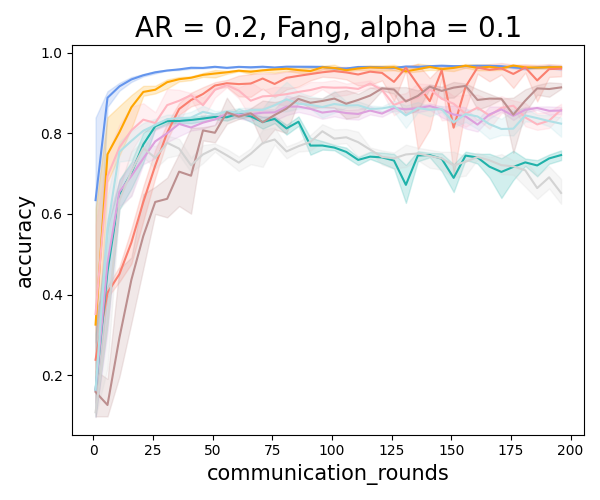}
    \end{subfigure}
        \begin{subfigure}{0.24\linewidth}
        \centering
        \includegraphics[width=1.0\linewidth]{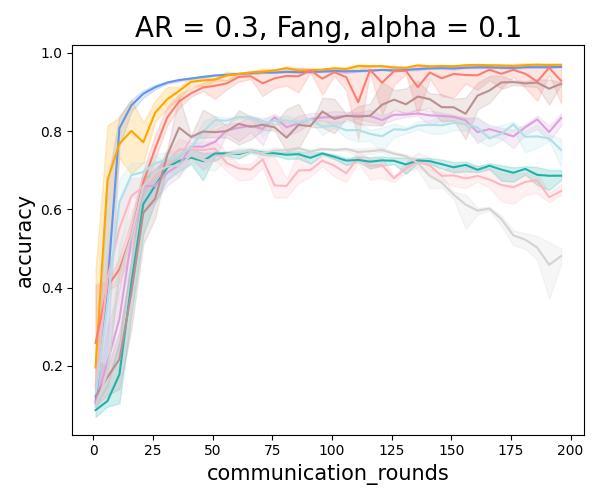}

    \end{subfigure}
    \begin{subfigure}{0.24\linewidth}
        \centering
        \includegraphics[width=1.0\linewidth]{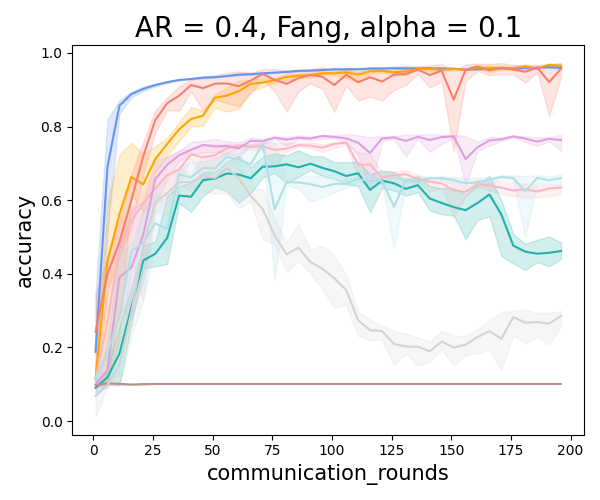}
    \end{subfigure}
    \begin{subfigure}{0.24\linewidth}
        \centering
        \includegraphics[width=1.0\linewidth]{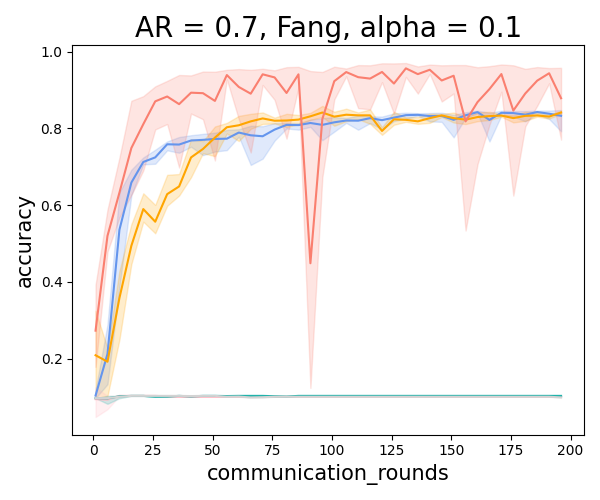} 
    \end{subfigure}
    \subcaption{MNIST, Median Attack}

    \end{subfigure}
    \begin{subfigure}{1.\linewidth}
        \begin{subfigure}{0.24\linewidth}
        \centering
        \includegraphics[width=1.0\linewidth]{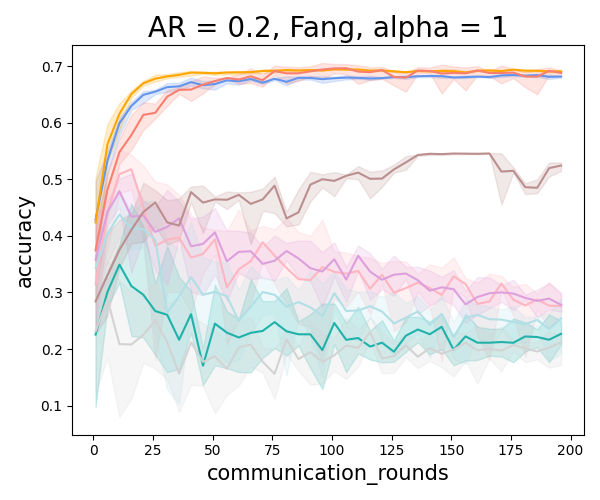}
        \end{subfigure}
        \begin{subfigure}{0.24\linewidth}
        \centering
        \includegraphics[width=1.0\linewidth]{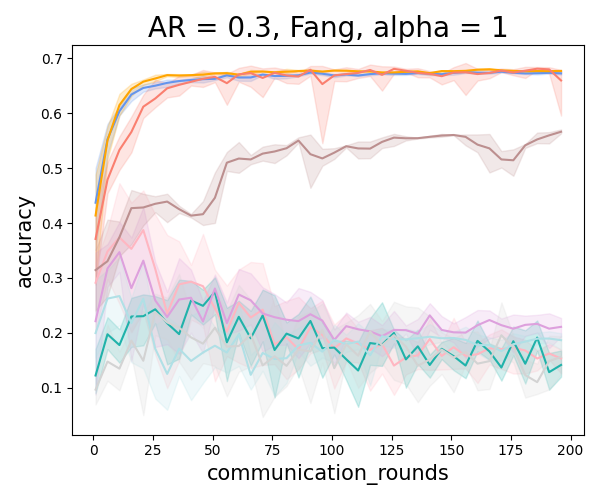}
    \end{subfigure}
    \begin{subfigure}{0.24\linewidth}
        \centering
        \includegraphics[width=1.0\linewidth]{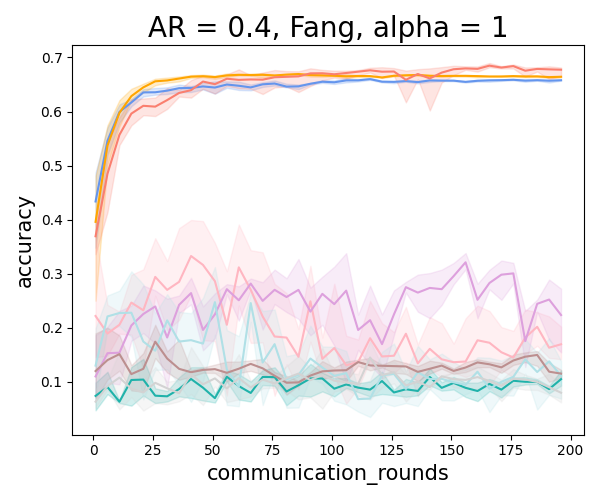}
        \end{subfigure}
    \begin{subfigure}{0.24\linewidth}
        \centering
        \includegraphics[width=1.0\linewidth]{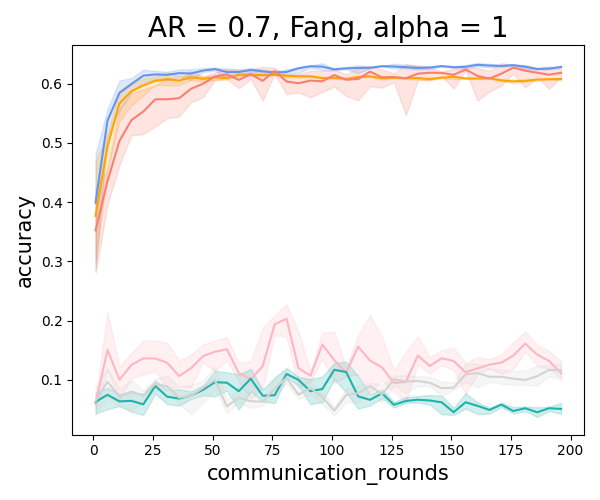} 
    \end{subfigure}
    \subcaption{CIFAR-10, Median Attack}
    \end{subfigure}     
    \caption{Defence against Fang Attack on MNIST/CIFAR-10 with the degree of data heterogeneity $\alpha = 0.1/1$, respectively, and different attack rates $AR \in \{0.2, 0.3,0.4,0.7\}$. }
    \label{fig:attack-fang}
\end{figure*}

\section{Discussions}
\label{suppl-sec:dis}
This section discusses the limitation and possible solutions.
% \subsection{Heterogeneous Model Architectures.}
Since we still optimize the combination weights for the local clients, one limitation of \text{SmartFL} is that the aggregated client model should be the same architecture and can not be directly applied on \textbf{heterogeneous model architectures}.
This can be alleviated by using multiple groups of model architectures. As illustrated in FedDF~\cite{lin2020ensembleFedDF}, knowledge distillation on unlabelled data using ensemble logits can allow information flow across models of different groups of architectures, and the server can use the ensemble of aggregated global models to make the final prediction.
Here we show that our solution for unlabelled data (SmartFL-U) shares the merits of regular knowledge distillation~\cite{lin2020ensembleFedDF} in allowing information flow across heterogeneous neural architectures~\cite{li2019fedmd} by using the ensemble logits of all clients to supervise the combination with groups.
\autoref{fig:hetero} visualizes the test accuracy in every communication round of ensemble performance of SmartFL and the state-of-the-art FedDF for heterogeneous model architectures (ResNet-8, MobileNet, and ShuffleNet) with 128 unlabelled data on CIFAR-10, and 512 unlabelled data on CIFAR-100. SmartFL consistently dominates FedDF, demonstrating the effectiveness of breaking the knowledge barrier of heterogeneous models by leveraging averaged logits to optimize the global models in the subspace. We leave the possible improvement through leveraging both ground truth labels and ensemble client knowledge as future work.

% general/ computational cost/ high efficency
% performance
% attack 
% \newpage

%%%%%%%%% REFERENCES
% {\small
% \bibliographystyle{ieee_fullname}
% \bibliography{egbib}
% }

\end{document}